\documentclass{article} % For LaTeX2e
\usepackage{iclr2026_conference,times}

% Optional math commands from https://github.com/goodfeli/dlbook_notation.
%%%%% NEW MATH DEFINITIONS %%%%%

\usepackage{amsmath,amsfonts,bm}
\usepackage{booktabs}
\usepackage{multirow}
\usepackage{graphicx}   % for \rotatebox
\usepackage{makecell}   % for line breaks in cells
% Mark sections of captions for referring to divisions of figures

% Highlight a newly defined term

% Figure reference, lower-case.

% Figure reference, capital. For start of sentence

% Section reference, lower-case.

% Section reference, capital.

% Reference to two sections.

% Reference to three sections.

% Reference to an equation, lower-case.
\def\eqref#1{equation~\ref{#1}}
% Reference to an equation, upper case

% A raw reference to an equation---avoid using if possible

% Reference to a chapter, lower-case.

% Reference to an equation, upper case.

% Reference to a range of chapters

% Reference to an algorithm, lower-case.

% Reference to an algorithm, upper case.

% Reference to a part, lower case

% Reference to a part, upper case

\def\1{\bm{1}}

% Random variables

% rm is already a command, just don't name any random variables m

% Random vectors

% Elements of random vectors

% Random matrices

% Elements of random matrices

% Vectors

% Elements of vectors

% Matrix

% Tensor
\DeclareMathAlphabet{\mathsfit}{\encodingdefault}{\sfdefault}{m}{sl}
\SetMathAlphabet{\mathsfit}{bold}{\encodingdefault}{\sfdefault}{bx}{n}

% Graph

% Sets

% Don't use a set called E, because this would be the same as our symbol
% for expectation.

% Entries of a matrix

% entries of a tensor
% Same font as tensor, without \bm wrapper

% The true underlying data generating distribution

% The empirical distribution defined by the training set

% The model distribution

% Stochastic autoencoder distributions

 % Laplace distribution

% Wolfram Mathworld says $L^2$ is for function spaces and $\ell^2$ is for vectors
% But then they seem to use $L^2$ for vectors throughout the site, and so does
% wikipedia.

 % See usage in notation.tex. Chosen to match Daphne's book.

% \usepackage{hyperref}
\usepackage[backref]{hyperref}

\usepackage{url}
\usepackage{bbm}
\usepackage{enumitem}
\usepackage{amssymb}
\usepackage{booktabs}
\usepackage{tabularx}
\usepackage[table]{xcolor}
\usepackage{array}
\usepackage{caption}
\usepackage{tcolorbox}
\usepackage{amsthm}
\usepackage{bm}
\usepackage{wrapfig}
\usepackage[utf8]{inputenc}
\usepackage{hyperref}
\usepackage{graphicx}
\usepackage{xcolor}
\usepackage{fontawesome5}

\newtheorem{lemma}{Lemma}
\newtheorem{theorem}{Theorem}

\definecolor{lightgray}{gray}{0.95}
\definecolor{titlegray}{gray}{0.85}
\definecolor{projblue}{RGB}{0,60,120}
\definecolor{softblack}{RGB}{40,40,40}
\definecolor{bronze}{RGB}{140,90,40}

\newcolumntype{L}{>{\raggedright\arraybackslash}X}
\newcolumntype{C}{>{\centering\arraybackslash}m{2.5cm}}

\newcommand{\METHODNAME}{\textsc{SPHERE}}

\title{Energy-Regularized Sequential Model \\ Editing on Hyperspheres}

% Authors must not appear in the submitted version. They should be hidden
% as long as the \iclrfinalcopy macro remains commented out below.
% Non-anonymous submissions will be rejected without review.

\author{
  Qingyuan Liu$^1$\thanks{\hspace{0.5mm}Equal contribution.}, 
  ~Jia-Chen Gu$^2$\footnotemark[1], 
  ~Yunzhi Yao$^3$, 
  ~Hong Wang$^4$, 
  ~Nanyun Peng$^2$\\
  $^1$Columbia University ~
  $^2$University of California, Los Angeles \\
  $^3$Zhejiang University ~
  $^4$University of Science and Technology of China\\ 
  \texttt{ql2505@columbia.edu}, 
  ~\texttt{gujc@ucla.edu}, 
  ~\texttt{violetpeng@cs.ucla.edu}
  \\[6pt]
  {\normalsize
  \href{https://www.qingyuanliu.net/sphere_projectpage/}
  {\textcolor{projblue}{\faGlobe}\hspace{2pt}Website}
  \quad
  \href{https://github.com/PlusLabNLP/SPHERE}
  {\textcolor{softblack}{\faGithub}\hspace{2pt}Code}
  }
}

% The \author macro works with any number of authors. There are two commands
% used to separate the names and addresses of multiple authors: \And and \AND.
%
% Using \And between authors leaves it to \LaTeX{} to determine where to break
% the lines. Using \AND forces a linebreak at that point. So, if \LaTeX{}
% puts 3 of 4 authors names on the first line, and the last on the second
% line, try using \AND instead of \And before the third author name.

\iclrfinalcopy % Uncomment for camera-ready version, but NOT for submission.
\begin{document}

\maketitle

\vspace{-6mm}

\begin{abstract}

Large language models (LLMs) require constant updates to remain aligned with evolving real-world knowledge. 
Model editing offers a lightweight alternative to retraining, but sequential editing that updates the LLM knowledge through multiple successive edits often destabilizes representations and induces catastrophic forgetting. 
In this work, we seek to better understand and mitigate performance degradation caused by sequential editing.
We hypothesize that \emph{hyperspherical uniformity}, a property that maintains uniform distribution of neuron weights on a hypersphere, helps the model remain stable, retain prior knowledge, while still accommodate new updates.
We use Hyperspherical Energy (HE) to quantify neuron uniformity during editing, and examine its correlation with editing performance.
Empirical studies across widely used editing methods reveals a strong correlation between HE dynamics and editing performance, with editing failures consistently coinciding with high HE fluctuations.  
We further theoretically prove that HE dynamics impose a lower bound on the degradation of pretrained knowledge, highlighting why HE stability is crucial for knowledge retention. 
Motivated by these insights, we propose \METHODNAME{} (\textbf{S}parse \textbf{P}rojection for \textbf{H}yperspherical \textbf{E}nergy-\textbf{R}egularized \textbf{E}diting), an HE-driven regularization strategy that stabilizes neuron weight distributions, ultimately preserving prior knowledge while enabling reliable sequential updates. 
Specifically, \METHODNAME{} identifies a sparse space complementary to the principal hyperspherical directions of the pretrained weight matrices and projects new knowledge onto it, attenuating perturbations on the principal directions.
Extensive experiments on LLaMA3 (8B) and Qwen2.5 (7B) show that \METHODNAME{} outperforms the best baseline in editing capability by an average of 16.41\%, while most faithfully preserving general model performance, thereby offering a principled path toward reliable large-scale knowledge editing.

\end{abstract}

\section{Introduction}

Large language models (LLMs) have demonstrated strong capabilities in knowledge storage, reasoning, and generation~\citep{DBLP:journals/corr/abs-2412-19437, meta2024llama4, DBLP:journals/corr/abs-2505-09388, openai2025gpt5}. 
However, the knowledge embedded in LLMs inevitably becomes outdated or incorrect, as real-world facts continuously evolve~\citep{DBLP:journals/csur/JiLFYSXIBMF23,DBLP:journals/tois/HuangYMZFWCPFQL25}. 
Retraining LLMs to incorporate such updates is prohibitively expensive, motivating the development of \emph{model editing} (also known as \emph{knowledge editing})~\citep{DBLP:conf/emnlp/CaoAT21,DBLP:conf/iclr/MitchellLBFM22,DBLP:conf/iclr/MengSABB23,DBLP:conf/emnlp/GuXMLLCP24,DBLP:conf/iclr/FangJWMSW0C25}. 
The most practical setting for model editing is \emph{sequential editing}, where multiple updates are applied over time. 
However, previous studies have shown that such interventions often suffer from significant performance degradation due to catastrophic forgetting~\citep{DBLP:conf/emnlp/GuXMLLCP24,DBLP:conf/acl/GuptaRA24}. 
Consequently, reconciling the trade-off between preserving original pretrained knowledge and integrating new editing knowledge remains an unresolved challenge.

In this work, we seek to better understand and mitigate the performance degradation caused by sequential editing.
We revisit model editing from the perspective of \emph{hyperspherical uniformity} of perturbed weights~\citep{DBLP:conf/aistats/LiuLL0SW21}, motivated by the observation that sequential edits often disrupt weight geometry, leading to degraded representations. 
Previous studies have shown that viewing a weight matrix as a set of neurons on a hypersphere (as shown in Figure~\ref{fig:teaser} (a)) and maintaining their hyperspherical uniformity is crucial for stable training and effective generalization~\citep{DBLP:journals/corr/CogswellAGZB15, DBLP:conf/icml/XieDZKYZX17, DBLP:conf/nips/QiuLFXFLZWS23}. 
To investigate the applicability of these principles to sequential editing, we adopt \emph{hyperspherical energy (HE)}~\citep{DBLP:conf/nips/LiuLLLYDS18,DBLP:conf/nips/QiuLFXFLZWS23} as a measure to quantify weight uniformity throughout sequential editing.
HE calculates the dispersion of neuron weight vectors on a hypersphere, where lower energy corresponds to a more balanced distribution of neurons. 
By tracking HE dynamics throughout sequential editing, we can better understand how edits affect weight uniformity, identify early signs of destabilization, and even develop HE-driven regularization strategies to stabilize the editing process.

\begin{figure}[t]
    \centering
    \vspace{-8mm}
    \includegraphics[width=0.98\textwidth]{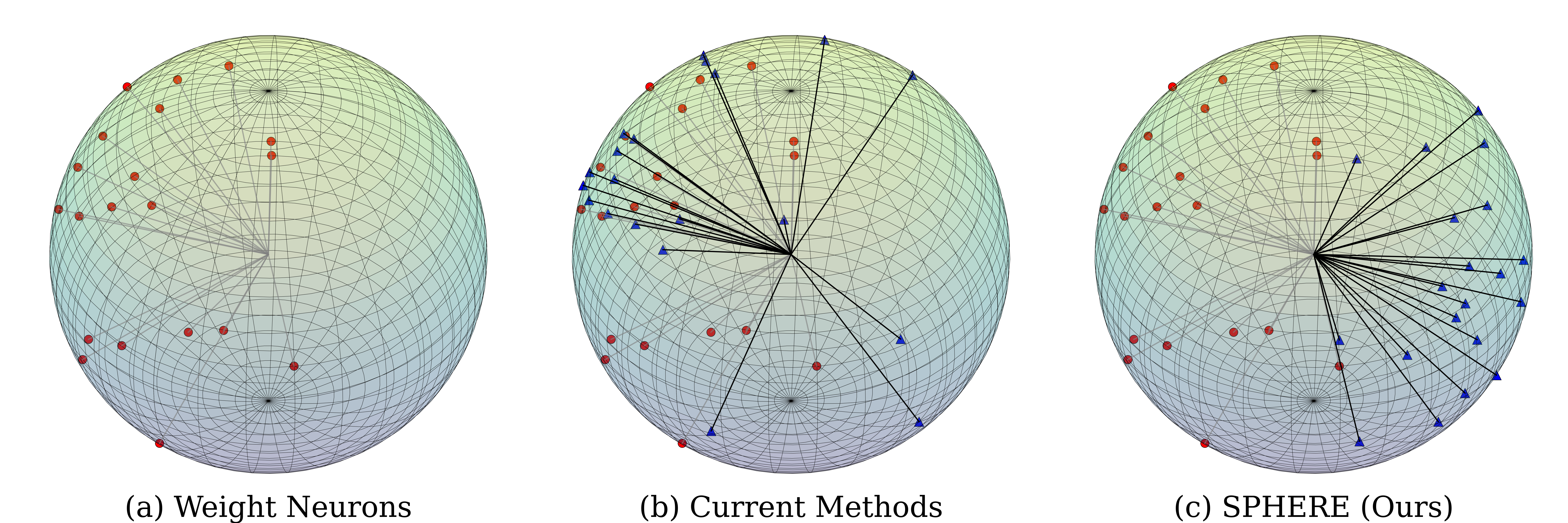} 
    \caption{
    (a) A weight matrix is viewed as a set of neurons (\textcolor{red}{red dots}) on a hypersphere. 
    (b) Current SOTA methods~\citep{DBLP:conf/iclr/MaWXLG25,DBLP:conf/iclr/FangJWMSW0C25} introduce perturbations (\textcolor{blue}{blue triangles}) that interfere with the principle hyperspherical directions of pre-edit weights.
    (c) \METHODNAME{} projects new knowledge onto a sparse space complementary to the principal hyperspherical directions.
    }
    \vspace{-4mm}
    \label{fig:teaser}
\end{figure}

To reveal the mechanisms underlying successful editing strategies from the perspective of hyperspherical uniformity, we first empirically analyze how HE evolves throughout sequential editing and examine how these dynamics relate to editing performance across six widely used methods. 
Experimental results reveal a strong correlation between hyperspherical uniformity and editing performance, with editing failures consistently coinciding with its collapse. 
Meanwhile, more advanced editing methods have proven more effective at preserving hyperspherical uniformity.
To complement these empirical findings, we further provide a theoretical analysis verifying that variations in HE establish a lower bound on the interference with the original pretrained knowledge. 
This result clarifies that state-of-the-art (SOTA) editing methods implicitly regulate hyperspherical uniformity and the lower bound on the interference, providing a principled explanation for their enhanced robustness.

Motivated by these empirical and theoretical findings, we propose \METHODNAME{} (\textbf{S}parse \textbf{P}rojection for \textbf{H}yperspherical \textbf{E}nergy-\textbf{R}egularized \textbf{E}diting), an HE-driven regularization strategy that stabilizes neuron weight distributions, ultimately preserving prior knowledge while enabling reliable sequential updates.
The key insight is that, as shown in Figure~\ref{fig:teaser} (b), current methods often introduce perturbations that interfere with the principal hyperspherical directions of the pretrained weight matrices, leading to instability, loss of uniformity, and eventual degradation of model performance.
To counteract these side effects, as shown in Figure~\ref{fig:teaser} (c), \METHODNAME{} identifies a sparse space complementary to the principal hyperspherical directions of the pretrained weight matrices and projects new knowledge onto it, attenuating perturbation components aligned with those principal directions.
By doing so, \METHODNAME{} effectively preserves the hyperspherical uniformity and substantially extends the number of effective sequential edits.

We evaluated \METHODNAME{} 
on \textbf{two LLMs}, including LLaMA3 (8B)~\citep{llama3modelcard} and Qwen2.5 (7B)~\citep{qwen2.5},
on \textbf{two editing datasets}, including CounterFact~\citep{DBLP:conf/nips/MengBAB22} and ZsRE~\citep{DBLP:conf/conll/LevySCZ17}.
\textbf{Four downstream tasks} including reasoning~\citep{DBLP:journals/corr/abs-2110-14168}, natural language inference~\citep{DBLP:conf/mlcw/DaganGM05}, open-domain QA~\citep{DBLP:journals/tacl/KwiatkowskiPRCP19}, and closed-domain QA~\citep{DBLP:conf/naacl/ClarkLCK0T19}
are employed to demonstrate the impact of editing on the general abilities of LLMs.
Experimental results show that \METHODNAME{} sustains editing capacity under large-scale editing settings, outperforming the best baseline~\citep{DBLP:conf/iclr/FangJWMSW0C25} by \textbf{16.41\%} on average. 
Beyond editing capacity, it more effectively preserves the hyperspherical uniformity and the general abilities of edited models than all baselines.
Furthermore, as a plug-and-play enhancement, \METHODNAME{} improves the mainstream editing methods~\citep{DBLP:conf/iclr/MengSABB23,DBLP:conf/emnlp/GuXMLLCP24,DBLP:conf/iclr/MaWXLG25} by \textbf{38.71\%} on average, offering a principled path toward reliable and scalable editing.

\section{Preliminaries}

\subsection{Model Editing}
\label{sec:model-editing}
Sequential model editing aims to update the knowledge stored in LLMs through multiple successive edits. 
Each edit modifies the model parameter $\bm{W} \in \mathbb{R}^{d_1 \times d_0}$ by adding a perturbation $\Delta \in \mathbb{R}^{d_1 \times d_0}$ in a locate-then-edit paradigm~\citep{DBLP:conf/nips/MengBAB22}, where $d_0$ and $d_1$ represent the dimensions of the intermediate and output layers of the feed-forward network (FFN), respectively. 
Specifically, suppose each edit updates $u$ pieces of knowledge in the form of (subject $s$, relation $r$, object $o$), e.g., ($s$ = \emph{United States}, $r$ = \emph{President of}, $o$ = \emph{Donald Trump}). 
The perturbed parameter is expected to associate $u$ new \emph{key-value} ($k$-$v$) pairs, where $k$ and $v$ encode $(s, r)$ and $(o)$ of the new knowledge, respectively. 
We can stack these keys and values into matrices as follows:
\begin{equation}
\bm{K}_1 = [\bm{k}_1 \,|\, \bm{k}_2 \,|\, \dots \,|\, \bm{k}_u] \in \mathbb{R}^{d_0 \times u}, \quad
\bm{V}_1 = [\bm{v}_1 \,|\, \bm{v}_2 \,|\, \dots \,|\, \bm{v}_u] \in \mathbb{R}^{d_1 \times u},
\label{eq:k1v1}
\end{equation}
where the subscripts of $\bm{k}$ and $\bm{v}$ represent the index of the to-be-updated knowledge. Therefore, the editing objective can be expressed as:
\begin{equation}
\Delta  \bm{W} = \arg\min_{\Delta \hat{\bm{W}}} \left\| (\bm{W} + \Delta \hat{\bm{W}}) \bm{K}_1 - \bm{V}_1 \right\|^2,
\label{eq:edit_obj}
\end{equation}
where $\|\cdot\|^2$ denotes the sum of the squared elements in the matrix.

Additionally, current methods typically incorporate an error term to preserve the original knowledge.
Let $\bm{K}_0$ and $\bm{V}_0$ represent the matrices formed by stacking the $\bm{k}$ and $\bm{v}$ corresponding to the original pretrained knowledge. 
Eqn.~\ref{eq:edit_obj} is regularized by involving the error term as follows:
\begin{equation}
\Delta  \bm{W} = \arg\min_{\Delta \hat{\bm{W}}} \left( \left\| (\bm{W} + \Delta \hat{\bm{W}}) \bm{K}_1 - \bm{V}_1 \right\|^2
+ \left\| (\bm{W} + \Delta \hat{\bm{W}}) \bm{K}_0 - \bm{V}_0 \right\|^2 \right).
\label{eq:edit_obj_preserve}
\end{equation}
Since $\bm{K}_0$ and $\bm{V}_0$ encode the original pretrained knowledge, we have $\bm{W} \bm{K}_0 = \bm{V}_0$ (cf. Eqn.~\ref{eq:k1v1}). By applying the normal equation, if the closed-form solution of Eqn.~\ref{eq:edit_obj_preserve} exists, it can be written as:
\begin{equation}
\Delta  \bm{W} = (\bm{V}_1 - \bm{W} \bm{K}_1) \bm{K}_T^{\top} 
\left( \bm{K}_0 \bm{K}_0^{\top} + \bm{K}_1 \bm{K}_1^{\top} \right)^{-1}.
\label{eq:closed_form_delta}
\end{equation}
Since the full scope of an LLM's knowledge is generally inaccessible, $\bm{K}_0$ is difficult to obtain directly but can be approximated from abundant text input. 
See Appendix~\ref{appendix:model_editing} for more details.

\subsection{Hyperspherical Energy}
\label{sec:HSE}
Hyperspherical Energy (HE) serves as a quantitative metric for measuring \emph{hyperspherical uniformity}. 
Given a group of neurons, HE characterizes their uniformity on a hypersphere by defining a generic potential energy based on their pairwise relationship. 
Lower energy represents that these neurons are more diverse and uniformly distributed, while higher energy reflects redundancy. 
Given a weight matrix $\bm{W} \in \mathbb{R}^{N \times (d+1)}$ represented as a set of $N$ neurons (i.e., kernels), where each row $\bm{w}_i \in \mathbb{R}^{d+1}$ corresponds to a neuron, its HE is defined as:
\begin{equation}
\bm{E_{s,d}}\left( \hat{\bm{w}}_i \mid_{i=1}^N \right)
= \sum_{i=1}^N \sum_{\substack{j=1, j \ne i}}^N f_s\left( \left\| \hat{\bm{w}}_i - \hat{\bm{w}}_j \right\| \right)
= \begin{cases}
\sum_{i \ne j} \left\| \hat{\bm{w}}_i - \hat{\bm{w}}_j \right\|^{-s}, & s > 0 \\
\sum_{i \ne j} \log\left( \left\| \hat{\bm{w}}_i - \hat{\bm{w}}_j \right\|^{-1} \right), & s = 0
\end{cases}
\label{eq:full_hse}
\end{equation}
where $\|\cdot\|$ denotes Euclidean distance, $f_s(\cdot)$ is a decreasing real-valued function, and $\hat{\bm{w}}_i = \frac{\bm{w}_i}{\|\bm{w}_i\|}$ is the $i$-th neuron weight projected onto the unit hypersphere $\mathbb{S}^d = \{ \bm{w} \in \mathbb{R}^{d+1} \mid \|\bm{w}\| = 1 \}$. 
We also denote $\hat{\bm{W}}_N = \{\hat{\bm{w}}_1, \cdots, \hat{\bm{w}}_N \in \mathbb{S}^d\}$, and $E_s = \bm{E_{s,d}}(\hat{\bm{w}}_i \mid_{i=1}^N)$ for short. 
There are plenty of choices for $f_s(\cdot)$, but in this paper we use $f_s(z) = z^{-s},\ s > 0$, known as Riesz $s$-kernels. 
Since each $\hat{\bm{w}}_i$ lies on the unit hypersphere, the squared Euclidean distance between two neurons can be equivalently expressed in angular form as $\|\hat{\bm{w}}_i-\hat{\bm{w}}_j\|^2 = 2\bigl(1-\cos\theta_{ij}\bigr)$, where $\theta_{ij}$ is the angle between $\hat{\bm{w}}_i$ and $\hat{\bm{w}}_j$. 
Substituting this into Eqn.~\ref{eq:full_hse}, we have:
\begin{equation}
\bm{E_{s,d}}\left( \hat{\bm{w}}_i \mid_{i=1}^N \right) 
= \sum_{i=1}^N \sum_{\substack{j=1, j \ne i}}^N \bigl( 2(1 - \cos\theta_{ij}) \bigr)^{-s/2}.
\label{eq:angular_HSE}
\end{equation}
This angular formulation highlights the geometric interpretation of HE: a higher value corresponds to neuron clustering with low angular diversity, while a lower value reflects a more uniform angular distribution across the hypersphere.

\section{Correlation between Hyperspherical Uniformity and Editing}
\label{sec:correlation}

HE and model editing are intrinsically connected through their shared focus on the geometry of high-dimensional parameter spaces. 
An optimal HE corresponds to more uniformly distributed representations on the unit hypersphere, typically reflecting well-conditioned parameters that enable reliable and stable sequential editing.
We first present empirical evidence revealing a strong correlation between HE and editing stability (Section~\ref{sec:Empirical}), followed by a formal theoretical analysis establishing the mathematical link between the two (Section~\ref{sec:Theoretical}).

\subsection{Empirical Analysis of the HE–Editing Stability Correlation}
\label{sec:Empirical}
To understand the failure modes of large-scale sequential editing, we examined how HE evolves throughout the editing process.
We performed 5{,}000 sequential edits on ZsRE dataset~\citep{DBLP:conf/conll/LevySCZ17} with a batch size of 100 on LLaMA3-8B~\citep{llama3modelcard} using six widely used editing methods, including Fine-Tuning (FT)~\citep{DBLP:journals/corr/abs-2012-00363}, 
ROME~\citep{DBLP:conf/nips/MengBAB22}, 
MEMIT~\citep{DBLP:conf/iclr/MengSABB23}, 
RECT~\citep{DBLP:conf/emnlp/GuXMLLCP24}, 
PRUNE~\citep{DBLP:conf/iclr/MaWXLG25}, 
and AlphaEdit~\citep{DBLP:conf/iclr/FangJWMSW0C25}. 
After each edit, we computed the HE of the perturbed weights and evaluated the editing performance using well-established metrics, including \textbf{Efficacy} (edit success),  \textbf{Generalization} (paraphrase success), and \textbf{Specificity} (neighborhood success). 
Readers can refer to Appendix~\ref{appendix-metrics} for detailed definition of these metrics.
We summarize our main observations as follows:

\begin{figure}[t]
    \centering
    \includegraphics[width=1\textwidth]{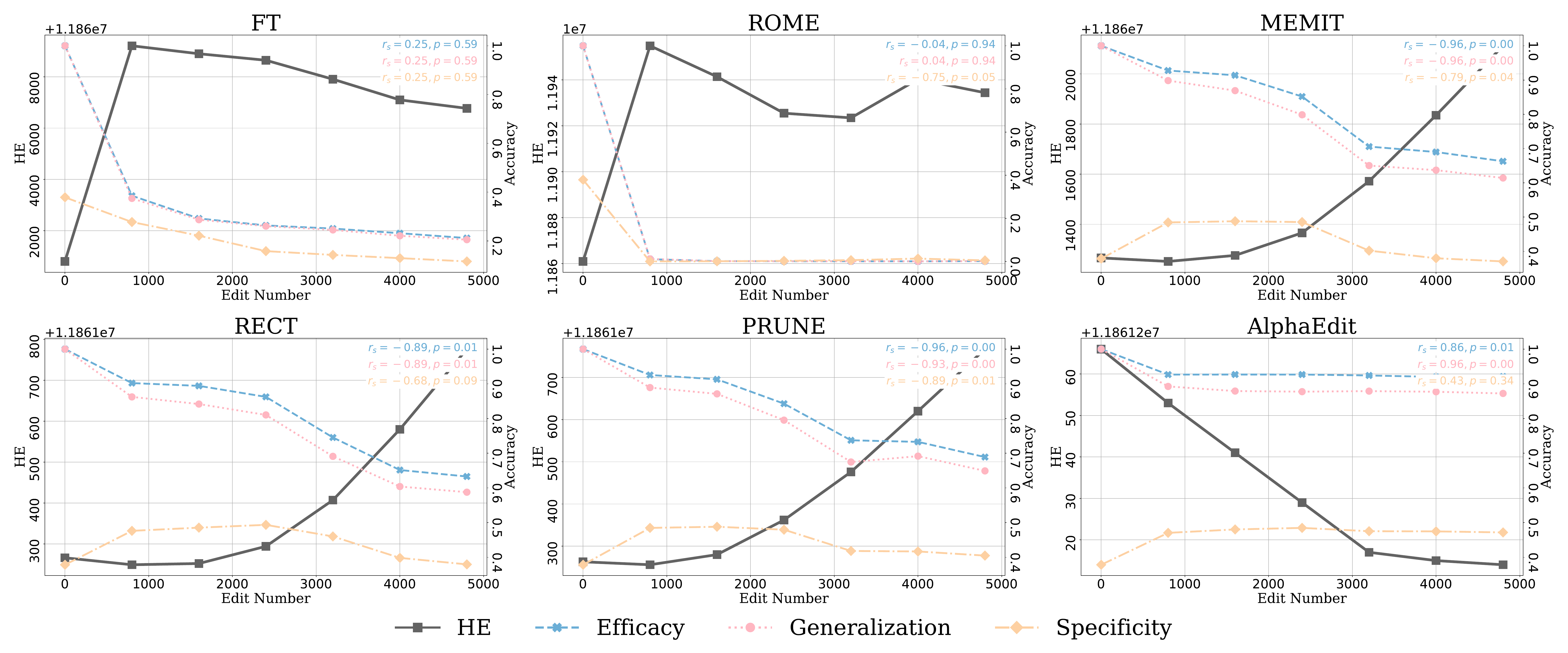}
    \caption{Trends of HE and editing performance throughout sequential editing. 
    The Spearman correlation scores between HE and each editing metric displayed at the end of each curve. 
    }
    \label{fig:sec_1}
\end{figure}

\vspace{-1mm}
\paragraph{Observation 1: Collapse in sequential editing is closely tied to sharp fluctuations in HE.}
Figure~\ref{fig:sec_1} reveals a strong correlation between HE dynamics and editing performance. 
The Spearman correlation scores~\citep{spearman1904proof} between HE and each editing metric, displayed at the end of each curve, consistently indicate a strong statistical dependence before model collapse\footnote{Since FT and ROME rapidly collapse at the very beginning, we instead emphasize their correlations by examining the curve fluctuations.}.
Most methods collapse well before 3,000 edits, whereas AlphaEdit demonstrates the strongest long-term editing capacity with the best preservation of hyperspherical uniformity. 
A closer examination of the metrics shows a consistent pattern in which each drop in performance is consistently accompanied by rapid shifts in HE, underscoring its central role in maintaining sequential editing stability.

\begin{figure}[t]
    \centering
    \includegraphics[width=1\textwidth]{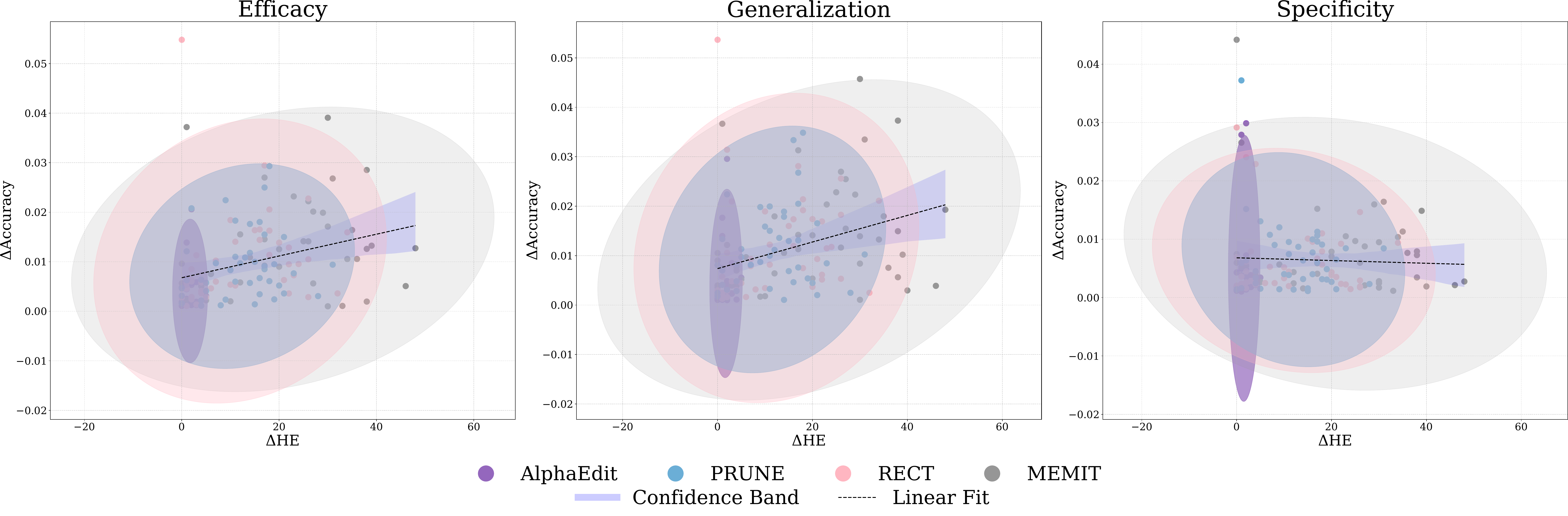} 
    \caption{Correlation between changes in HE and editing performance across consecutive edited weights. Each point corresponds to a $\Delta \mathrm{HE}$–$\Delta \mathrm{Acc.}$ pair for one method over five thousand sequential edits. Confidence ellipses and regression lines illustrate overall trends.
    }
    \label{fig:sec_2}
    \vspace{-3mm}
\end{figure}

\vspace{-1mm}
\paragraph{Observation 2: Advanced editing methods suppress HE fluctuations effectively.}  
Figure~\ref{fig:sec_2} illustrates the correlation between changes in HE ($\Delta \mathrm{HE}$) and editing performance ($\Delta \mathrm{Acc.}$), where each point denotes the difference between two consecutive batch edits: points near the origin indicate greater stability with minimal variation in both HE and accuracy, while points farther away reflect larger fluctuations and less stable editing. 
Most advanced methods exhibit tightly clustered distributions near the origin, indicating stable editing dynamics and minimal weight distortion. Furthermore, we fit a linear regression over all points across metrics, which demonstrates a statistically significant positive correlation between $\Delta \mathrm{HE}$ and $\Delta \mathrm{Acc.}$ in terms of Efficacy and Generalization. This suggests a strong positive correlation between editing stability and HE stability, implying that the effectiveness of SOTA approaches may stem from their ability to suppress HE fluctuations.

\subsection{Theoretical Analysis of HE's Impact on Editing Stability}
\label{sec:Theoretical}

We further turn to a theoretical analysis of how HE impacts editing stability, aiming to provide a principled explanation for the patterns observed in practice. 
Given the editing objective in Eqn.~\ref{eq:edit_obj}, it inevitably perturbs the original pretrained knowledge in LLMs, which can be expressed as:
\begin{equation}
\Delta \bm{V} = (\bm{W} + \Delta \bm{W}) \bm{K}_0 - \bm{V}_0 = \Delta \bm{W} \bm{K}_0,
\label{eq:delta_V}\end{equation}
where $\bm{W} \bm{K}_0 = \bm{V}_0$, as $\bm{K}_0$ and $\bm{V}_0$ represent the original knowledge.
Additionally, from the HE definition in Eqn.~\ref{eq:full_hse}, the change in HE after editing can be written as:
\begin{equation}
\Delta \bm{HE} = \sum_{i \ne j} \left( \|\bm{w}_i - \bm{w}_j\|^{-2} - \|\bm{w}_i + \Delta \bm{w}_i - \bm{w}_j - \Delta \bm{w}_j\|^{-2} \right),
\label{eq:hse-change}
\end{equation}
where $\Delta \bm{w}_i$ denotes the perturbation to $\bm{w}_i$. This term $\Delta \bm{HE}$ measures how angular separation among weight vectors changes after editing.

Our theoretical analysis, detailed in Appendix~\ref{appendix:Correlation}, culminates in a key result that formally links the geometric change in weight space $\Delta \bm{HE}$ to the output perturbation $\Delta \bm{V}$, derived from Proposition~\ref{prop:he_bound}.
\begin{theorem}[Lower Bound on Output Perturbation]
\label{thm:main_bound_in_text}
Under the assumptions of orthonormal inputs and small perturbations, the output perturbation $\Delta \bm{V}$ is lower-bounded by squared change in HE:
\begin{equation}
    |\Delta \bm{V}| \;\ge\; \left( \frac{\Delta \bm{HE}}{K} \right)^2, 
    \quad 
    K = 4\Biggl( \sum_{k=1}^p \Biggl( \sum_{j \ne k} \|\bm{w}_k - \bm{w}_j\|^{-3} \Biggr)^2 \Biggr)^{1/2}.
    \label{eq:hse-bound}
\end{equation}
where $K$ is a constant dependent on the original weight matrix geometry.
\end{theorem}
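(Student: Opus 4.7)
The plan is to connect $\Delta\bm{HE}$ to the output perturbation $\Delta\bm{V}$ in two stages: first bound $\Delta\bm{HE}$ in terms of the raw weight perturbation $\Delta\bm{W}$ via a first-order Taylor expansion plus Cauchy--Schwarz, then convert that bound into one on $\Delta\bm{V}$ using the orthonormal-inputs hypothesis together with the identity $\Delta\bm{V} = \Delta\bm{W}\bm{K}_1$ from Eqn.~\ref{eq:delta_V}. The very form of the constant $K$, a weighted sum of pairwise distances raised to the power $-3$, is a strong hint that this is the intended route, since differentiating $\|\bm{u}\|^{-2}$ drops the exponent from $2$ to $3$.

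Concretely, I would first linearize Eqn.~\ref{eq:hse-change} under the small-perturbation assumption, using the expansion $\|\bm{u}+\delta\bm{u}\|^{-2} \approx \|\bm{u}\|^{-2} - 2\|\bm{u}\|^{-4}\langle\bm{u},\delta\bm{u}\rangle$; after summing and collecting the coefficient of each $\Delta\bm{w}_k$, this yields $\Delta\bm{HE} \approx \sum_k \langle \Delta\bm{w}_k,\bm{a}_k\rangle$ with $\bm{a}_k = 2\sum_{j\ne k}\|\bm{w}_k-\bm{w}_j\|^{-4}(\bm{w}_k-\bm{w}_j)$. The triangle inequality gives $\|\bm{a}_k\| \le 2\sum_{j\ne k}\|\bm{w}_k-\bm{w}_j\|^{-3}$, hence $\bigl(\sum_k \|\bm{a}_k\|^2\bigr)^{1/2} \le K/2$ with the same $K$ as in the statement, and Cauchy--Schwarz delivers $|\Delta\bm{HE}| \le (K/2)\|\Delta\bm{W}\|_F$; this is the content I would expect Proposition~\ref{prop:he_bound} to package. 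The orthonormal-inputs assumption, interpreted as $\bm{K}_1^{\top}\bm{K}_1 = \bm{I}_u$ together with the rows of $\Delta\bm{W}$ lying in the column span of $\bm{K}_1$ (which can be read off from the closed-form update in Eqn.~\ref{eq:closed_form_delta}), then gives $\|\Delta\bm{V}\|_F = \|\Delta\bm{W}\bm{K}_1\|_F = \|\Delta\bm{W}\|_F$. Combining and squaring yields $\|\Delta\bm{V}\|_F^2 \ge 4\bigl(\Delta\bm{HE}/K\bigr)^2 \ge \bigl(\Delta\bm{HE}/K\bigr)^2$, which is the stated bound under the paper's convention (following Eqn.~\ref{eq:edit_obj}) that $|\bm{M}|$ denotes the squared Frobenius norm of $\bm{M}$.

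The hardest step is the passage from $\Delta\bm{W}$ to $\Delta\bm{V}$. With only $\bm{K}_1^{\top}\bm{K}_1 = \bm{I}_u$, the identity $\Delta\bm{V} = \Delta\bm{W}\bm{K}_1$ gives the upper bound $\|\Delta\bm{V}\|_F \le \|\Delta\bm{W}\|_F$, which is the wrong direction for a lower bound. To recover equality one must argue that $\Delta\bm{W}$ lies wholly in the image of $\bm{K}_1$, and this is delicate because the closed-form solution of Eqn.~\ref{eq:closed_form_delta} is of the form $(\,\cdots\,)\bm{K}_1^{\top} M^{-1}$ with $M=\bm{K}_0\bm{K}_0^{\top}+\bm{K}_1\bm{K}_1^{\top}$, so one additionally needs $M$ to preserve $\mathrm{col}(\bm{K}_1)$; this is typically enforced by an orthogonality assumption between the pretrained keys $\bm{K}_0$ and the new keys $\bm{K}_1$ that accompanies the ``orthonormal inputs'' hypothesis. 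A secondary nuisance is making ``small perturbation'' quantitative: the Taylor remainder is of order $\|\Delta\bm{w}\|^2/\min_{j\ne k}\|\bm{w}_k-\bm{w}_j\|^4$, so the cleanest route is to impose $\max_k\|\Delta\bm{w}_k\| \ll \min_{j\ne k}\|\bm{w}_k-\bm{w}_j\|$ and absorb the quadratic remainder into the loose factor of $4$ that the derivation already enjoys.
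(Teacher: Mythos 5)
Your proposal follows essentially the same route as the paper's: a first-order Taylor expansion of the Riesz kernel, Cauchy--Schwarz (you apply it at the collected-coefficient level via $\bm{a}_k$, the paper applies it per pair and then re-indexes), a triangle-inequality bound on the resulting coefficient, and the orthonormal-inputs hypothesis to identify $\|\Delta\bm{W}\|_F^2$ with $\Delta\bm{V}$. The one place you over-worry is the final step: the paper's Assumption~\ref{ass:basis_vectors} takes $\{\bm{k}_i\}$ to be the \emph{full} standard orthonormal basis of $\mathbb{R}^q$ (so $\Delta\bm{V}\triangleq\sum_i\|\Delta\bm{W}\bm{k}_i\|^2=\|\Delta\bm{W}\|_F^2$ is an identity, not an inequality), not merely $\bm{K}_1^{\top}\bm{K}_1=\bm{I}_u$ for a partial set of keys, so the concern about $\Delta\bm{W}$ lying in $\mathrm{col}(\bm{K}_1)$ does not arise under the paper's stated hypothesis. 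The residual factor-of-$2$ bookkeeping in your $\bm{a}_k$ traces to using the $\sum_{i<j}$ convention of Eqn.~\ref{eq:hse-change} rather than the $\sum_{i\ne j}$ convention the appendix proof uses to produce the stated $K$; your derivation is internally consistent and you correctly absorb the slack.
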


\vspace{-2mm}

This theorem reveals a key insight: the change in HE ($|\Delta \bm{HE}|$) inevitably induces a substantial output perturbation ($\Delta \bm{V}$), meaning that edits that significantly distort the geometric arrangement of neurons are bound to corrupt pretrained knowledge. This result provides a solid theoretical foundation for our empirical findings and underscores HE as a fundamental indicator of editing stability.

\section{\METHODNAME{}}
\label{sec:method}
On account of the above findings, we argue that ideal sequential editing should preserve the hyperspherical uniformity of edited weights. 
Accordingly, we introduce \METHODNAME{}, an HE-driven regularization strategy designed to mitigate HE fluctuations while integrating new knowledge.

\METHODNAME{} first estimates the principal hyperspherical directions of pretrained knowledge and then defines their orthogonal complement as the sparse space.
Projecting editing perturbations onto this space enables knowledge injection while minimizing interference with original knowledge.

\paragraph{Principal Space Estimation}
To identify the principal hyperspherical directions in $\bm{W}$, we seek a unit vector $v \in \mathbb{R}^d$ that maximizes the variance of all neurons in $\bm{W}$ when projected onto $v$ as: 
\begin{equation}
\bm{v} 
= \arg\max_{\|\hat{\bm{v}}\|=1} \left( \tfrac{1}{n} \|\bm{W} \hat{\bm{v}}\|^2 \right)
= \arg\max_{\|\hat{\bm{v}}\|=1} \left( \tfrac{1}{n} \hat{\bm{v}}^\top \bigl( \bm{W}^\top \bm{W}\bigr) \hat{\bm{v}} \right).
\end{equation}
According to the Rayleigh quotient theory~\citep{horn1985matrix,parlett1998symmetric}, the maximum of $\frac{1}{n}v^\top \bm{W}^\top \bm{W} v$ corresponds to the largest eigenvalue $\lambda^*$ of $\frac{1}{n}\bm{W}^\top \bm{W}$, with the associated eigenvector $v^*$ as the principal direction. 
Extending this to the top-$r$ principal directions enables us to capture a richer low-dimensional space of the weight geometry, so we collect the eigenvectors associated with the $r$ largest eigenvalues to form the principal space matrix, which can be expressed as:
\begin{equation}
\bm{U} = [\bm{v}_{d-r+1}, \dots, \bm{v}_d] \in \mathbb{R}^{d \times r}, 
\label{eq:principle}
\end{equation}
where $r$ satisfies $\sum_{i=d-r+1}^{d} \lambda_i \ge \eta \sum_{i=1}^{d} \lambda_i$, with the cumulative ratio $\eta$ (see Appendix~\ref{appendix:eta-and-alpha}).

\paragraph{Sparse Space Definition}
This space is defined as the orthogonal complement of $\bm{U}$ in Eqn.~\ref{eq:principle} as:
\begin{equation}
\bm{P}_\perp = I - \alpha \bm{U} \bm{U}^\top \in \mathbb{R}^{d \times d},
\end{equation}
where $\alpha$ controls the suppression strength of the components along the subspace spanned by $\bm{U}$ (see Appendix~\ref{appendix:eta-and-alpha}). Specifically, $\alpha = 1$ corresponds to a hard orthogonal projection that completely removes the contribution of $\bm{U}$, while $0 < \alpha < 1$ yields a soft projection that only attenuates it.

\paragraph{Sparse Space Projection}
Given a perturbation matrix $\Delta \bm{W}$ produced by any editing method, we project it onto the sparse space using $P_\perp$, and then combine it with the original weight matrix as:           
\begin{equation}
\hat{\bm{W}} = \bm{W} + \Delta \bm{W}_{\text{proj}} = \bm{W} + \Delta \bm{W}\bm{P}_\perp.
\label{eq:sphere_constrain}
\end{equation}

In summary, \METHODNAME{} suppresses perturbations aligned with the principal weight directions to preserve hyperspherical uniformity, enabling more stable, longer-lasting performance without compromising general abilities. 
For theoretical completeness, we also provide a mathematical proof that \METHODNAME{} suppresses the $\Delta \bm{HE}$, ensuring bounded variations in the hidden representations $\Delta \bm{V}$ and justifying its effectiveness during editing (see Appendix~\ref{appendix:sparse-space-projection}). More details in Appendix~\ref{appendix:projection-implementation}

\section{Experiments}
In this section, we aim to address the following research questions:
\vspace{-2mm}
\begin{itemize}[leftmargin=*, itemsep=0pt, parsep=0pt]
    \item \textbf{RQ1:}  How does \METHODNAME{} perform on sequential editing tasks compared to baseline methods? 
    \item \textbf{RQ2:} Can \METHODNAME{} effectively preserve the hyperspherical uniformity of edited weights? 
    \item \textbf{RQ3:} How does \METHODNAME{}-edited LLMs perform on general ability evaluations? 
    \item \textbf{RQ4:} Can baseline methods be significantly improved with plug-and-play \METHODNAME{}?
    \item \textbf{RQ5:} What is the computational overhead of \METHODNAME{} in terms of time efficiency?
    \item \textbf{RQ6:} How do different hyperparameter choices affect the stability of \METHODNAME{}?
\end{itemize}
\subsection{Experimental Setup}
\vspace{-2mm}
\paragraph{Base LLMs and Baseline Methods.} 
Experiments were conducted on LLaMA3 (8B)~\citep{llama3modelcard} and Qwen2.5 (7B)~\citep{qwen2.5}. 
We compared our approach against a range of representative sequential editing baselines, including 
Fine-Tuning (FT)~\citep{DBLP:journals/corr/abs-2012-00363}, 
MEMIT~\citep{DBLP:conf/iclr/MengSABB23}, 
RECT~\citep{DBLP:conf/emnlp/GuXMLLCP24}, 
PRUNE~\citep{DBLP:conf/iclr/MaWXLG25}, 
and AlphaEdit~\citep{DBLP:conf/iclr/FangJWMSW0C25}.

\vspace{-2mm}
\paragraph{Datasets and Evaluation Metrics.} 
Two widely used benchmarks were adopted: 
CounterFact~\citep{DBLP:conf/nips/MengBAB22} and ZsRE~\citep{DBLP:conf/conll/LevySCZ17}.
Following prior work~\citep{DBLP:conf/nips/MengBAB22}, five evaluation metrics were reported: 
\textbf{Efficacy} (edit success), 
\textbf{Generalization} (paraphrase success), 
\textbf{Specificity} (neighborhood success), 
\textbf{Fluency} (generation entropy), 
and \textbf{Consistency} (reference score). 
For rigorous evaluation, we adopt the \textbf{average top-1 accuracy} as the metric for both datasets.
Readers can refer to Appendix~\ref{appendix-setup} for more detailed experimental setup.

\subsection{Performance of Sequential Model Editing (RQ1)}

\newcommand{\meanstd}[2]{#1{\scriptsize$\pm$#2}}
\begin{table*}[t]
\centering
\caption{Comparison of \METHODNAME{} with existing methods on sequential editing. 
\emph{Eff.}, \emph{Gen.}, \emph{Spe.}, \emph{Flu.} and \emph{Consis.} denote Efficacy, Generalization, Specificity, Fluency and Consistency, respectively. 
The best results are highlighted in bold, while the second-best results are underlined.}
\label{tab:main_results}
\resizebox{\textwidth}{!}{
\setlength{\arrayrulewidth}{0.7pt} 
\renewcommand{\arraystretch}{1.3} 
\begin{tabular}{c c | c c c | c c c | c c}
\specialrule{1.2pt}{0pt}{0pt}  
\multirow{2}{*}{\textbf{Method}} & \multirow{2}{*}{\textbf{Model}}
& \multicolumn{3}{c|}{\textbf{ZSRE}} 
& \multicolumn{5}{c}{\textbf{Counterfact}} \\
\cmidrule(lr){3-5} \cmidrule(lr){6-10}
 &  & \textbf{Eff.$\uparrow$} & \textbf{Gen.$\uparrow$} & \textbf{Spe.$\uparrow$} 
 & \textbf{Eff.$\uparrow$} & \textbf{Gen.$\uparrow$} & \textbf{Spe.$\uparrow$} & \textbf{Flu.$\uparrow$} & \textbf{Consis.$\uparrow$} \\
\hline\hline

% ----------- Llama3-8B block ------------
Pre-edited   & \multirow{7}{*}{\makecell{\rotatebox{90}{LLaMA3-8B}}} 
& \meanstd{35.42}{0.30} & \meanstd{34.17}{0.30} & \meanstd{38.02}{0.27} 
& \meanstd{0.49}{0.07} & \meanstd{0.44}{0.05} & \meanstd{18.09}{0.24} & \meanstd{634.84}{0.12} & \meanstd{22.06}{0.08} \\
\hline
FT           &           & \meanstd{15.27}{0.21} & \meanstd{14.78}{0.21} & \meanstd{5.06}{0.10} 
& \underline{\meanstd{8.40}{0.28}} & \underline{\meanstd{2.54}{0.13}} & \meanstd{0.03}{0.01} & \meanstd{409.80}{0.67} & \meanstd{19.35}{0.13} \\
MEMIT        &           & \meanstd{0.00}{0.00} & \meanstd{0.00}{0.00} & \meanstd{0.06}{0.01} 
& \meanstd{0.00}{0.00} & \meanstd{0.00}{0.00} & \meanstd{0.00}{0.00} & \meanstd{318.19}{0.24} & \meanstd{4.19}{0.04} \\
PRUNE        &           & \meanstd{10.35}{0.18} & \meanstd{10.08}{0.18} & \meanstd{9.55}{0.15} 
& \meanstd{1.19}{0.11} & \meanstd{0.34}{0.04} & \underline{\meanstd{0.62}{0.03}} & \textbf{\meanstd{618.72}{0.08}} & \textbf{\meanstd{49.24}{0.13}} \\
RECT         &           & \meanstd{0.01}{0.00} & \meanstd{0.01}{0.01} & \meanstd{0.04}{0.01} 
& \meanstd{0.57}{0.08} & \meanstd{0.29}{0.04} & \meanstd{0.10}{0.01} & \meanstd{438.83}{0.18} & \meanstd{9.40}{0.05} \\
AlphaEdit    &           & \underline{\meanstd{86.64}{0.23}} & \underline{\meanstd{81.28}{0.28}} & \underline{\meanstd{28.78}{0.22}} 
& \meanstd{4.37}{0.20} & \meanstd{1.71}{0.10} & \meanstd{0.57}{0.03} & \meanstd{482.36}{0.44} & \meanstd{4.71}{0.04} \\
\hline
\METHODNAME{}&           & \textbf{\meanstd{90.01}{0.21}} & \textbf{\meanstd{84.67}{0.26}} & \textbf{\meanstd{45.40}{0.29}} 
& \textbf{\meanstd{52.89}{0.50}} & \textbf{\meanstd{32.07}{0.39}} & \textbf{\meanstd{5.01}{0.10}} & \underline{\meanstd{551.51}{0.53}} & \underline{\meanstd{30.89}{0.13}} \\
\hline\hline

% ----------- Qwen2.5-7B block ------------
Pre-edited   & \multirow{7}{*}{\makecell{\rotatebox{90}{Qwen2.5-7B}}} 
& \meanstd{35.29}{0.29} & \meanstd{34.10}{0.28} & \meanstd{38.44}{0.27} 
& \meanstd{0.42}{0.06} & \meanstd{0.46}{0.05} & \meanstd{15.06}{0.20} & \meanstd{624.45}{0.11} & \meanstd{23.02}{0.69} \\
\hline
FT           &           & \meanstd{4.97}{0.14} & \meanstd{4.58}{0.13} & \meanstd{4.01}{0.11} 
& \meanstd{15.44}{0.36} & \meanstd{4.63}{0.17} & \meanstd{1.46}{0.05} & \meanstd{214.26}{0.09} & \meanstd{3.15}{0.02} \\
MEMIT        &           & \meanstd{0.13}{0.02} & \meanstd{0.12}{0.01} & \meanstd{0.04}{0.01} 
& \meanstd{0.00}{0.00} & \meanstd{0.00}{0.00} & \meanstd{0.00}{0.00} & \meanstd{370.84}{0.30} & \meanstd{3.59}{0.03} \\
PRUNE        &           & \underline{\meanstd{47.93}{0.36}} & \underline{\meanstd{45.50}{0.35}} & \textbf{\meanstd{39.20}{0.28}} 
& \meanstd{14.30}{0.35} & \meanstd{11.27}{0.26} & \textbf{\meanstd{6.75}{0.12}} & \textbf{\meanstd{620.74}{0.10}} & \textbf{\meanstd{29.50}{0.08}} \\
RECT         &           & \meanstd{0.73}{0.04} & \meanstd{0.75}{0.04} & \meanstd{0.05}{0.07} 
& \meanstd{0.64}{0.08} & \meanstd{0.19}{0.03} & \meanstd{0.09}{0.01} & \meanstd{368.46}{0.27} & \meanstd{1.35}{0.01} \\
AlphaEdit    &           & \meanstd{42.01}{0.40} & \meanstd{39.99}{0.39} & \meanstd{13.87}{0.20} 
& \underline{\meanstd{43.92}{0.50}} & \underline{\meanstd{24.37}{0.36}} & \meanstd{2.32}{0.06} & \meanstd{479.83}{0.77} & \meanstd{4.67}{0.07} \\
\hline
\METHODNAME{}&           & \textbf{\meanstd{70.04}{0.36}} & \textbf{\meanstd{65.43}{0.37}} & \underline{\meanstd{27.35}{0.26}} 
& \textbf{\meanstd{60.76}{0.49}} & \textbf{\meanstd{29.24}{0.37}} & \underline{\meanstd{3.83}{0.08}} & \underline{\meanstd{612.67}{0.22}} & \underline{\meanstd{14.74}{0.07}} \\
\specialrule{1.2pt}{0pt}{0pt}

\end{tabular}
}
\end{table*}
Table~\ref{tab:main_results} presents the results under a commonly used sequential editing setup, using 15,000 samples with 100 edits each for LLaMA3 (8B), while Qwen2.5 (7B) is restricted to 5,000 edits as further updates induce severe model collapse, where all editing methods underperformed compared to the pre-edit baseline. 
\METHODNAME{} is plug-and-play, and AlphaEdit~\citep{DBLP:conf/iclr/FangJWMSW0C25} is adopted as the default base method in Table~\ref{tab:main_results} to better illustrate its capability. Additional experiments with other base methods are presented in Section~\ref{sec:5-5}.
Overall, \METHODNAME{} consistently outperforms baseline methods across nearly all metrics and base models. In particular, it achieves substantial gains in both \textbf{Efficacy} and \textbf{Generalization}, with average improvements of \textbf{24.19\%} and \textbf{16.02\%}, respectively, over the best baseline. It also maintains notable performance in Fluency and Consistency, indicating its ability to preserve factual accuracy while generating coherent and natural outputs.

\subsection{Analysis of Edited Weights (RQ2)}
This analysis evaluates whether \METHODNAME{} can effectively maintain the hyperspherical uniformity of edited weights.
We extracted the edited weights from LLaMA3 after 15,000 sequential edits on CounterFact.  
As shown in Figure~\ref{fig:Sec_5_heatmap}, we computed the cosine similarity between each pair of weight neurons and used heatmap to visualize the hyperspherical uniformity before and after editing. In Figure~\ref{fig:Sec_5_pca}, t-SNE~\citep{JMLR:v9:vandermaaten08a} was used to visualize the normalized neuron distribution in $\bm{W}$ before and after editing. 
It can be seen that \METHODNAME{} \textbf{effectively preserves hyperspherical uniformity after editing, as the cosine similarity among weight neurons remains close to the original distribution}, thereby avoiding directional collapse. 
Moreover, the pre- and post-edited weights exhibit nearly overlapping distributions, indicating that \METHODNAME{} prevents significant shifts in weights and maintains consistency. 
In contrast, baseline methods such as MEMIT and AlphaEdit induce clear angular concentration in neuron directions, causing neurons to cluster in limited angular regions and significantly reducing hyperspherical directional uniformity. More results on Qwen2.5 are in Appendix~\ref{appendix:hidden-representation-qwen}.

\begin{figure}[t]
    \centering
    \includegraphics[width=1\textwidth]{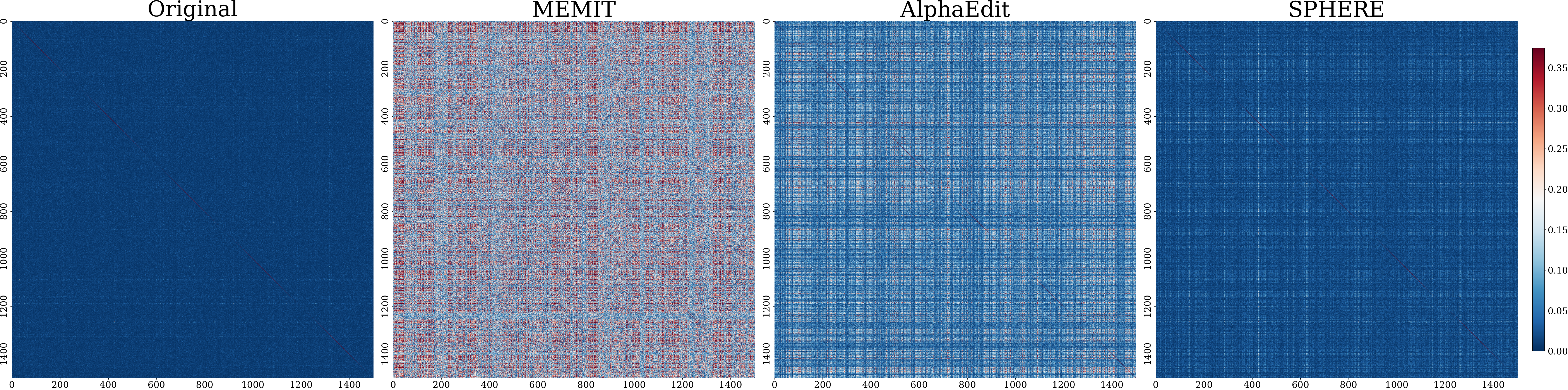} 
    \caption{
    Cosine similarity between neurons in the updated weight matrix after 15,000 edits. Darker colors indicate lower similarity, reflecting better hyperspherical and orthogonal uniformity.
    \METHODNAME{} effectively preserves the weight structure, demonstrating the most stable hyperspherical uniformity.
    }
    \label{fig:Sec_5_heatmap}
    \vspace{-3mm}
\end{figure}

\begin{figure}[t]
    \centering
    \includegraphics[width=1\textwidth]{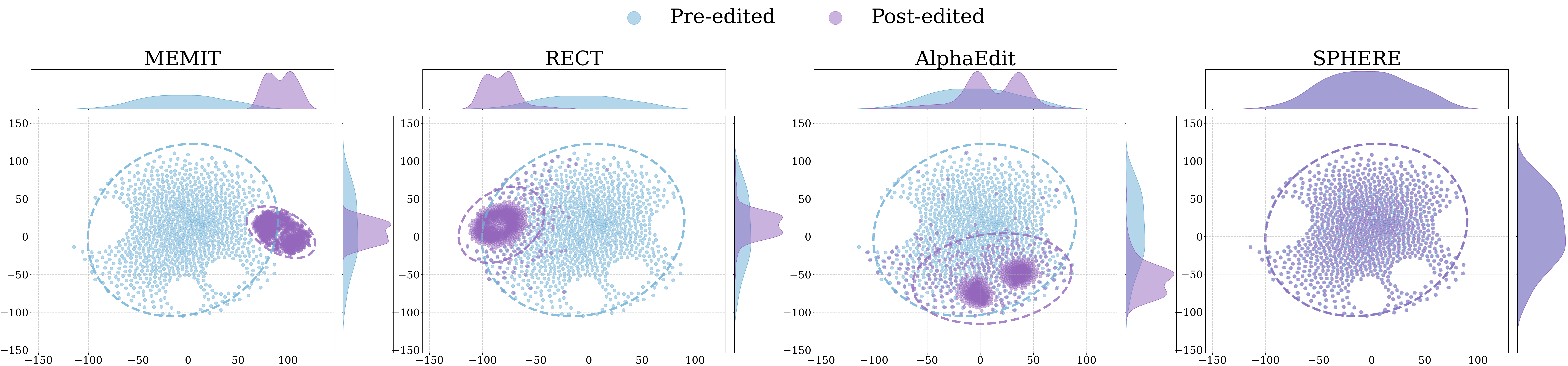} 
    \caption{The t-SNE distribution of weight neurons of pre-edited and post-edited LLM after 15,000 edits using dimensionality reduction. The top and right curve graphs display the marginal distributions for two reduced dimensions, where \METHODNAME{} consistently exhibits minimal shift. }
    \label{fig:Sec_5_pca}
    \vspace{-3mm}
\end{figure}

\subsection{Evaluation of General Abilities (RQ3)}
To extensively evaluate whether post-edited LLMs can preserve the general abilities, four representative tasks were adopted following
\citet{DBLP:conf/emnlp/GuXMLLCP24}, including 
\textbf{Reasoning} on the GSM8K~\citep{DBLP:journals/corr/abs-2110-14168}, measured by solve rate.
\textbf{Natural language inference (NLI)} on the RTE~\citep{DBLP:conf/mlcw/DaganGM05}, measured by accuracy of two-way classification,  
\textbf{Open-domain QA} on the Natural Questions~\citep{DBLP:journals/tacl/KwiatkowskiPRCP19}, measured by exact match (EM) with the reference answer after minor normalization~\citep{DBLP:conf/acl/ChenFWB17, DBLP:conf/acl/LeeCT19}.
\textbf{Closed-domain QA} on BoolQ~\citep{DBLP:conf/naacl/ClarkLCK0T19}, also measured by EM. 
Figure~\ref{fig:line_general} depicts how performance varies with the number of edited samples across four tasks. 
We report general performance every 1k edits up to 5k, and every 5k edits thereafter (up to 15k), providing a comprehensive view of the degradation trend. 
The results show that \METHODNAME{} effectively preserves the general abilities of post-edited LLMs even under extensive editing, maintaining the original model performance across all metrics after 15k edits. In contrast, LLMs edited with baseline methods rapidly lose their general abilities with all metrics approaching zero. 
These findings underscore the critical role of hyperspherical uniformity in safeguarding the broad abilities learned from the underlying corpus.

\begin{figure}[t]
    \centering
    \includegraphics[width=1\textwidth]{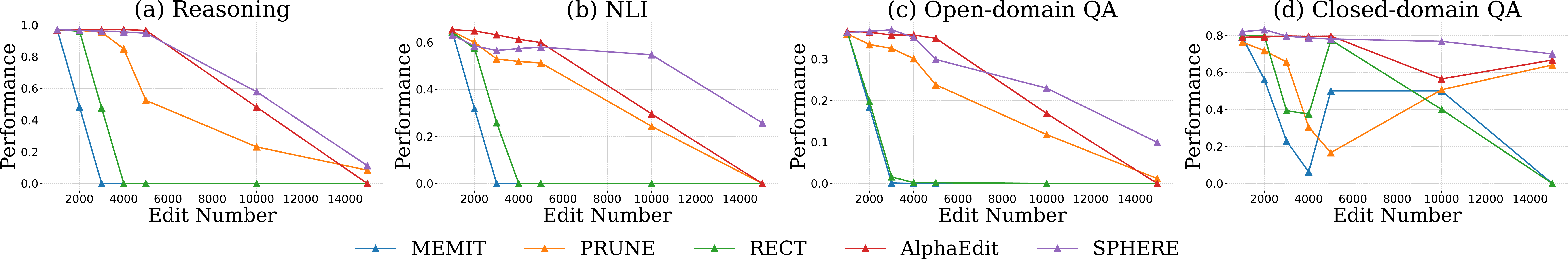} 
    \caption{General ability testing of post-edited LLaMA3 (8B) on four tasks.}
    \label{fig:line_general}
\end{figure}

\subsection{Performance Improvements of Baseline Methods (RQ4)}
\label{sec:5-5}
We investigated whether the sparse space projection strategy of \METHODNAME{} can serve as a general enhancement to existing methods. 
A single line of code from \METHODNAME{} regarding the projection was inserted into the baselines with minimal modification, and we evaluated their performance before and after integration (as detailed in Appendix~\ref{appendix:projection-implementation}). 
Results of 3,000 sequential edits on LLaMA3 (8B) are reported in Figure~\ref{fig:baseline_improve_all}. 
\textbf{\METHODNAME{} is integrated seamlessly with diverse editing methods and significantly boosts their performance.} 
On average, the optimized baselines achieve relative improvements of \textbf{49.05\%}, \textbf{42.64\%}, and \textbf{24.44\%} in \textbf{Efficacy}, \textbf{Generalization}, and \textbf{Specificity}, respectively, underscoring the strong potential and broad applicability of the proposed sparse space projection as a plug-and-play enhancement for model editing. 
The baselines enhanced with the projection also demonstrate significantly better robustness in general abilities (see Appendix~\ref{general-baseline}).

\begin{figure}[t]
    \centering
    \includegraphics[width=0.98\textwidth]{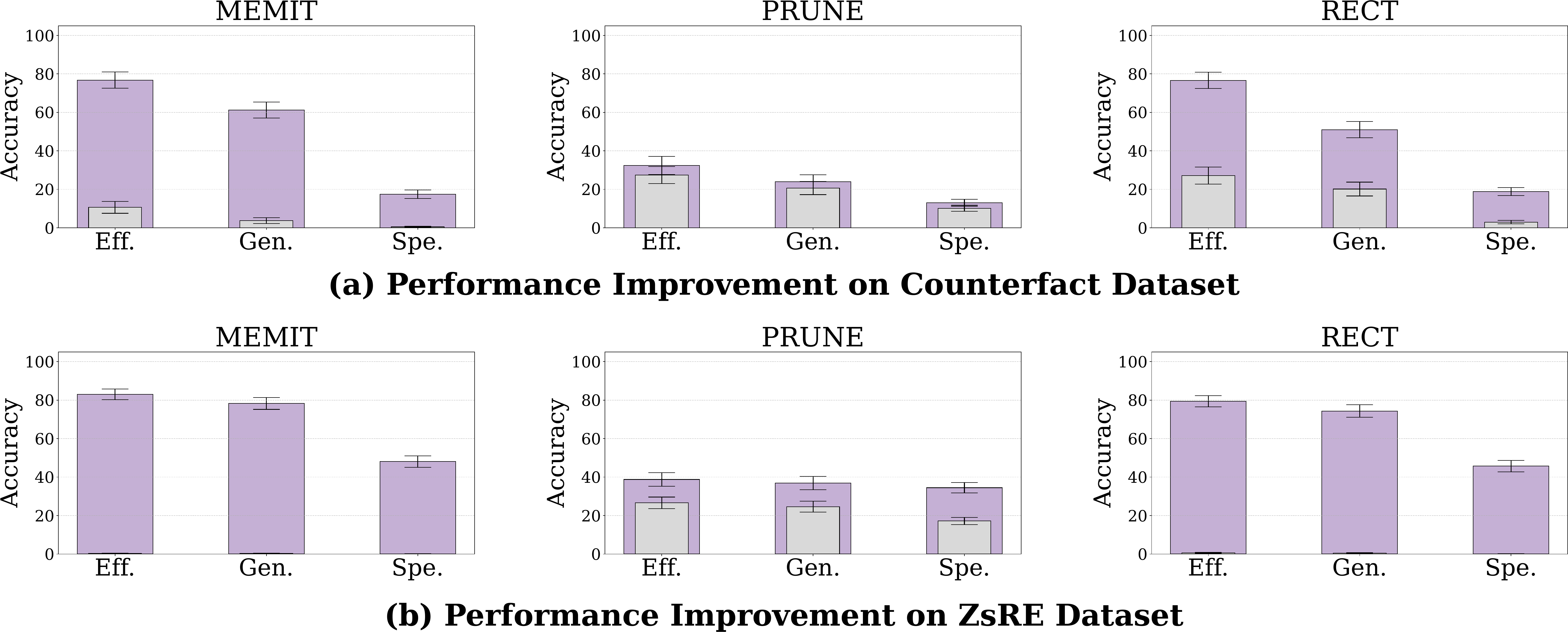}
    \caption{Performance improvements of baseline editing methods after adding a single line of code from \METHODNAME{} (i.e., sparse space projection). Gray bars denote the original baseline performance, while purple bars indicate the performance after enhancement.}
    \label{fig:baseline_improve_all}
    \vspace{-2mm}
\end{figure}

\subsection{Analysis of Computational Overhead (RQ5)}
\label{sec:5-6}
\begin{wraptable}{r}{0.42\textwidth}
\vspace{-1.4em}
\centering
\footnotesize
\setlength{\tabcolsep}{3pt}
\caption{Runtimes of \METHODNAME{}.}
\vspace{-1em}
\begin{tabular}{lccc}
\toprule
Model & Edit (s) & Proj. (s) & Ratio (\%) \\
\midrule
LLaMA3-8B   & 543.26  & 18.00 & 3.31 \\
Qwen2.5-7B  & 535.73  & 35.95 & 6.71 \\
Qwen2.5-32B & 1656.58 & 99.60 & 6.01 \\
\bottomrule
\end{tabular}
\label{tab:projection_time}
\vspace{-1.5em}
\end{wraptable}
To evaluate the scalability of \METHODNAME{}, we analyze its computational cost relative to the base editing method as both the model size and the number of sequential edits increase. Specifically, we measure the runtime of the projection step introduced by \METHODNAME{} and the overall batch-editing process. Table~\ref{tab:projection_time} reports these runtimes together with the projection-to-total time ratio, evaluated under 3,000 sequential edits using MEMIT as the base method, with a batch size of 100.
The results indicate that the projection step introduces only a minor relative cost, accounting for \textbf{3.31\%} of the total editing time for LLaMA3 (8B), \textbf{6.71\%} for Qwen2.5 (7B), and \textbf{6.01\%} for Qwen2.5 (32B). Moreover, \textbf{the relative computational cost of SPHERE remains consistent across models of different scales within the same family}, suggesting that the additional projection operation is lightweight and does not hinder the practicality of \METHODNAME{} in large-scale batch sequential editing scenarios.

\subsection{Sensitivity to Hyperparameters (RQ6)}
\label{sec:5-7}

\begin{wrapfigure}{r}{0.42\textwidth}
\vspace{-1.2em}
\centering
\includegraphics[width=\linewidth]{./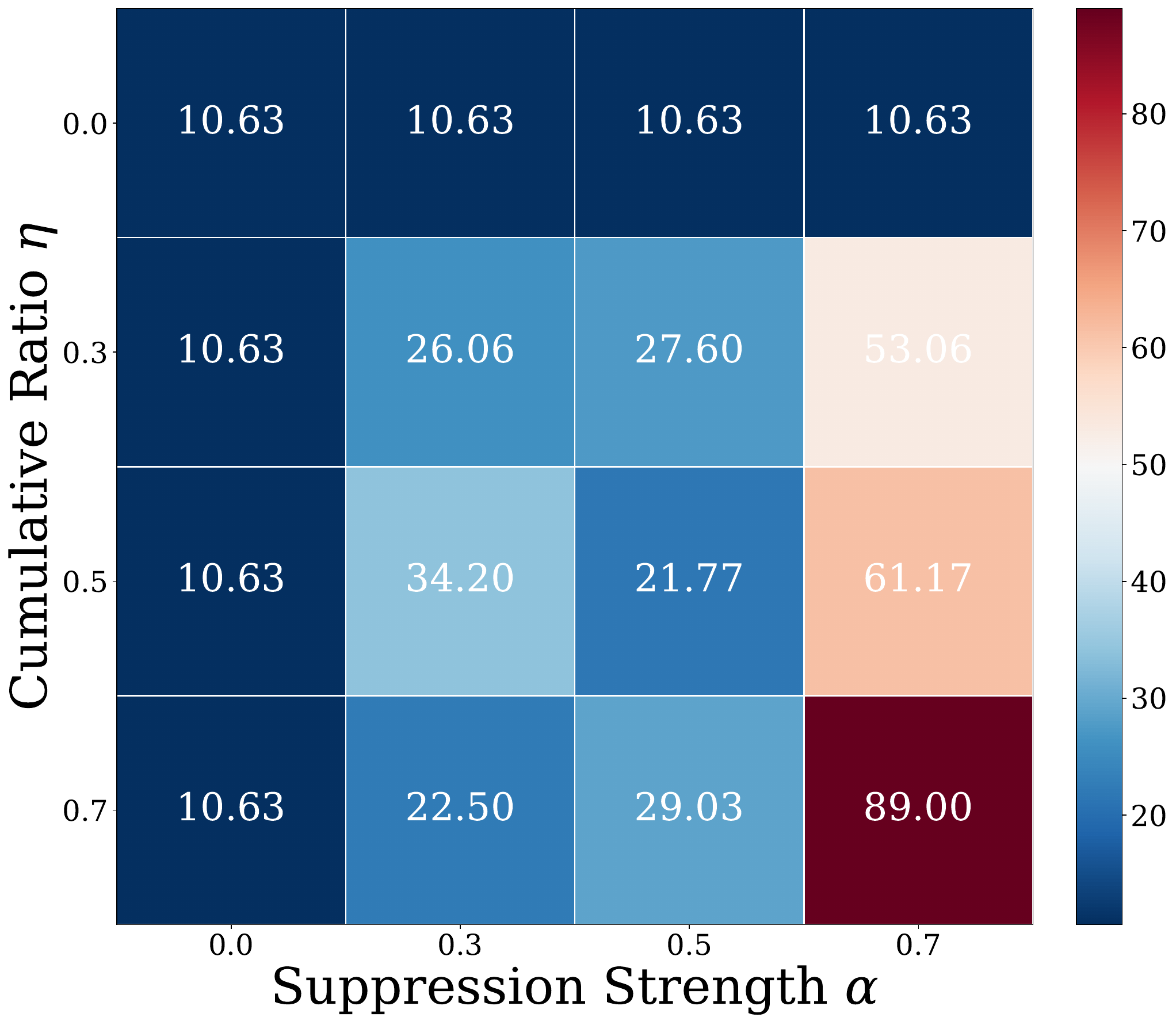}
\vspace{-1.5em}
\caption{Sensitivity of \METHODNAME{}.}
\label{fig:hyper_sensitivity}
\vspace{-1em}
\end{wrapfigure}
As analyzed in Section~\ref{sec:correlation}, different editing methods exhibit varying magnitudes of HE fluctuation. Since SPHERE explicitly regulates hyperspherical energy accumulation through projection, its behavior is primarily governed by the \textbf{cumulative ratio $\eta$} and the \textbf{suppression strength $\alpha$} (introduced in Appendix~\ref{appendix:eta-and-alpha}). It is therefore important to evaluate the sensitivity of \METHODNAME{} to these two hyperparameters.
To this end, we conducted an expanded ablation study that systematically varies both hyperparameters over a wide range, using MEMIT as the base method on LLaMA3 (8B) with 3,000 sequential edits, while keeping all other settings consistent with the main experiment. As shown in Figure~\ref{fig:hyper_sensitivity}, we evaluate editing performance and visualize the results using a heatmap to illustrate the performance variation before and after applying \METHODNAME{} under different combinations of $(\eta, \alpha)$.
Although performance fluctuations are observed under different hyperparameter settings, \textbf{across all configurations, \METHODNAME{} consistently improves the performance of its original counterpart}, demonstrating that SPHERE remains effective and robust over a wide range of hyperparameter choices. Practitioners may therefore tune these hyperparameters to suit their specific baseline or use case.
\section{Related Work}

\paragraph{Model Editing Methods.} 
From the perspective of whether model parameters are modified, existing approaches can be broadly categorized into \emph{parameter-modifying} \citep{DBLP:conf/iclr/MitchellLBFM22, DBLP:conf/iclr/MengSABB23, DBLP:conf/iclr/MaWXLG25, DBLP:conf/iclr/FangJWMSW0C25}, which directly adjust a small subset of model parameters, and \emph{parameter-preserving}~\citep{DBLP:conf/emnlp/ZhengLDFWXC23, DBLP:conf/aaai/YuCZH24, DBLP:conf/nips/HartvigsenSPKG23}, which integrate auxiliary modules without altering the original weights. In this work, we focus on \emph{parameter-modifying methods} which typically employs meta-learning or locating-then-editing strategies~\citep{DBLP:journals/corr/abs-2401-01286}. Representative works of meta-learning include KE~\citep{DBLP:conf/emnlp/CaoAT21} and MEND~\citep{DBLP:conf/iclr/MitchellLBFM22}, which leverage hypernetworks to generate parameter updates. Locate-then-edit methods, such as ROME~\citep{DBLP:conf/nips/MengBAB22} and MEMIT~\citep{DBLP:conf/iclr/MengSABB23}, prioritize pinpointing the knowledge’s storage location before making
targeted edits. 
Recent extensions like RECT~\citep{DBLP:conf/emnlp/GuXMLLCP24} and PRUNE~\citep{DBLP:conf/iclr/MaWXLG25} mitigate degradation of general capabilities of LLMs by better constraining edit complexity via sparsity and condition number. Recently, AlphaEdit~\citep{DBLP:conf/iclr/FangJWMSW0C25} further generalizes this paradigm by projecting the perturbation into the nullspace of the previous knowledge set.

\vspace{-3mm}
\paragraph{Learning with Hyperspherical Uniformity.}
Early studies~\citep{DBLP:conf/icml/XieDZKYZX17, DBLP:conf/iclr/Rodriguez0CGR17, DBLP:conf/icml/XieSX17, DBLP:journals/corr/CogswellAGZB15} sought to improve the generalization capacity of neural networks by reducing redundancy through diversification, as rigorously analyzed in~\citep{DBLP:conf/icml/XieZX16}. Although these works examined angular diversity, they largely neglected the notion of global equidistribution of embeddings on the hypersphere. In contrast to orthogonality, where perpendicular vectors are defined
to be diverse, hyperspherical uniformity promotes embeddings that are maximally separated in angle, thereby encouraging uniform distribution across the hypersphere~\citep{DBLP:conf/aistats/LiuLL0SW21, DBLP:conf/nips/LiuLLLYDS18}. More recently, \citet{DBLP:conf/nips/SmerkousBL24} enhancing training stability by incorporating centered kernel alignment into hyperspherical energy, enhancing training stability by addressing the lack of permutation invariance inherent in naive similarity metrics.

\section{Conclusion}
In this work, we demonstrated that hyperspherical uniformity is a critical factor in stabilizing sequential editing for LLMs, supported by both empirical evidence and rigorous theoretical proof. 
Motivated by this insight, we propose \METHODNAME{}, a regularization strategy that preserves hyperspherical uniformity by projecting updates onto a space complementary to the principal directions of pretrained weights.
Extensive evaluations on LLaMA3 (8B) and Qwen2.5 (7B) across multiple editing datasets and downstream tasks confirm that \METHODNAME{} not only enhances editing performance by 16.41\% over the strongest baseline but also more faithfully preserves weight geometry and general abilities of models.
Furthermore, when applied as a plug-and-play enhancement, it yields an additional average improvement of 38.71\% across existing methods.
Collectively, our findings establish \METHODNAME{} as both theoretically grounded and empirically effective, providing a principled and scalable solution for reliable large-scale model editing.

\subsubsection*{Ethics Statement}
\METHODNAME{} significantly enhances the reliability of large-scale sequential model editing by preserving hyperspherical uniformity, which makes it a valuable way for updating and managing knowledge in real-world applications where long-term stability is essential. At the same time, the ability to directly alter stored knowledge in LLMs carries inherent risks, including the potential introduction of bias or harmful information. To address these concerns, we strongly recommend rigorous validation procedures, transparent reporting, and strict oversight when deploying such techniques. While the core motivation of \METHODNAME{} is positive, aiming to facilitate efficient and trustworthy updates of LLMs, we emphasize that its use must remain responsible and cautious to ensure ethical outcomes.

We used LLMs to assist with improving grammar, clarity, and wording in parts of this work. The use of LLMs was limited to language refinement, with all ideas, analyses, and conclusions solely developed by the authors. We restate this announcement in Appendix~\ref{sec:llm_usage}.

\subsubsection*{Acknowledgments}
This research is based upon work supported by NSF CAREER \#2339766 and an Amazon AGI Research Award. 
We would like to express gratitude to the UCLANLP group members for their valuable feedback.

\bibliography{iclr2026_conference}

\begin{thebibliography}{47}
\providecommand{\natexlab}[1]{#1}
\providecommand{\url}[1]{\texttt{#1}}
\expandafter\ifx\csname urlstyle\endcsname\relax
  \providecommand{\doi}[1]{doi: #1}\else
  \providecommand{\doi}{doi: \begingroup \urlstyle{rm}\Url}\fi

\bibitem[AI@Meta(2024)]{llama3modelcard}
AI@Meta.
\newblock Llama 3 model card, 2024.
\newblock URL \url{https://github.com/meta-llama/llama3/blob/main/MODEL_CARD.md}.

\bibitem[Cao et~al.(2021)Cao, Aziz, and Titov]{DBLP:conf/emnlp/CaoAT21}
Nicola~De Cao, Wilker Aziz, and Ivan Titov.
\newblock Editing factual knowledge in language models.
\newblock In Marie{-}Francine Moens, Xuanjing Huang, Lucia Specia, and Scott~Wen{-}tau Yih (eds.), \emph{Proceedings of the 2021 Conference on Empirical Methods in Natural Language Processing, {EMNLP} 2021, Virtual Event / Punta Cana, Dominican Republic, 7-11 November, 2021}, pp.\  6491--6506. Association for Computational Linguistics, 2021.
\newblock \doi{10.18653/V1/2021.EMNLP-MAIN.522}.
\newblock URL \url{https://doi.org/10.18653/v1/2021.emnlp-main.522}.

\bibitem[Chen et~al.(2017)Chen, Fisch, Weston, and Bordes]{DBLP:conf/acl/ChenFWB17}
Danqi Chen, Adam Fisch, Jason Weston, and Antoine Bordes.
\newblock Reading wikipedia to answer open-domain questions.
\newblock In Regina Barzilay and Min{-}Yen Kan (eds.), \emph{Proceedings of the 55th Annual Meeting of the Association for Computational Linguistics, {ACL} 2017, Vancouver, Canada, July 30 - August 4, Volume 1: Long Papers}, pp.\  1870--1879. Association for Computational Linguistics, 2017.
\newblock \doi{10.18653/V1/P17-1171}.
\newblock URL \url{https://doi.org/10.18653/v1/P17-1171}.

\bibitem[Clark et~al.(2019)Clark, Lee, Chang, Kwiatkowski, Collins, and Toutanova]{DBLP:conf/naacl/ClarkLCK0T19}
Christopher Clark, Kenton Lee, Ming{-}Wei Chang, Tom Kwiatkowski, Michael Collins, and Kristina Toutanova.
\newblock Boolq: Exploring the surprising difficulty of natural yes/no questions.
\newblock In Jill Burstein, Christy Doran, and Thamar Solorio (eds.), \emph{Proceedings of the 2019 Conference of the North American Chapter of the Association for Computational Linguistics: Human Language Technologies, {NAACL-HLT} 2019, Minneapolis, MN, USA, June 2-7, 2019, Volume 1 (Long and Short Papers)}, pp.\  2924--2936. Association for Computational Linguistics, 2019.
\newblock \doi{10.18653/V1/N19-1300}.
\newblock URL \url{https://doi.org/10.18653/v1/n19-1300}.

\bibitem[Cobbe et~al.(2021)Cobbe, Kosaraju, Bavarian, Chen, Jun, Kaiser, Plappert, Tworek, Hilton, Nakano, Hesse, and Schulman]{DBLP:journals/corr/abs-2110-14168}
Karl Cobbe, Vineet Kosaraju, Mohammad Bavarian, Mark Chen, Heewoo Jun, Lukasz Kaiser, Matthias Plappert, Jerry Tworek, Jacob Hilton, Reiichiro Nakano, Christopher Hesse, and John Schulman.
\newblock Training verifiers to solve math word problems.
\newblock \emph{CoRR}, abs/2110.14168, 2021.
\newblock URL \url{https://arxiv.org/abs/2110.14168}.

\bibitem[Cogswell et~al.(2016)Cogswell, Ahmed, Girshick, Zitnick, and Batra]{DBLP:journals/corr/CogswellAGZB15}
Michael Cogswell, Faruk Ahmed, Ross~B. Girshick, Larry Zitnick, and Dhruv Batra.
\newblock Reducing overfitting in deep networks by decorrelating representations.
\newblock In Yoshua Bengio and Yann LeCun (eds.), \emph{4th International Conference on Learning Representations, {ICLR} 2016, San Juan, Puerto Rico, May 2-4, 2016, Conference Track Proceedings}, 2016.
\newblock URL \url{http://arxiv.org/abs/1511.06068}.

\bibitem[Dagan et~al.(2005)Dagan, Glickman, and Magnini]{DBLP:conf/mlcw/DaganGM05}
Ido Dagan, Oren Glickman, and Bernardo Magnini.
\newblock The {PASCAL} recognising textual entailment challenge.
\newblock In Joaquin~Qui{\~{n}}onero Candela, Ido Dagan, Bernardo Magnini, and Florence d'Alch{\'{e}}{-}Buc (eds.), \emph{Machine Learning Challenges, Evaluating Predictive Uncertainty, Visual Object Classification and Recognizing Textual Entailment, First {PASCAL} Machine Learning Challenges Workshop, {MLCW} 2005, Southampton, UK, April 11-13, 2005, Revised Selected Papers}, volume 3944 of \emph{Lecture Notes in Computer Science}, pp.\  177--190. Springer, 2005.
\newblock \doi{10.1007/11736790\_9}.
\newblock URL \url{https://doi.org/10.1007/11736790\_9}.

\bibitem[DeepSeek{-}AI et~al.(2024)DeepSeek{-}AI, Liu, Feng, Xue, Wang, Wu, Lu, Zhao, Deng, Zhang, Ruan, Dai, Guo, Yang, Chen, Ji, Li, Lin, Dai, Luo, Hao, Chen, Li, Zhang, Bao, Xu, Wang, Zhang, Ding, Xin, Gao, Li, Qu, Cai, Liang, Guo, Ni, Li, Wang, Chen, Chen, Yuan, Qiu, Li, Song, Dong, Hu, Gao, Guan, Huang, Yu, Wang, Zhang, Xu, Xia, Zhao, Wang, Zhang, Li, Wang, Zhang, Zhang, Tang, Li, Tian, Huang, Wang, Zhang, Wang, Zhu, Chen, Du, Chen, Jin, Ge, Zhang, Pan, Wang, Xu, Zhang, Chen, Li, Lu, Zhou, Chen, Wu, Ye, Ma, Wang, Zhou, Yu, Zhou, Pan, Wang, Yun, Pei, Sun, Xiao, and Zeng]{DBLP:journals/corr/abs-2412-19437}
DeepSeek{-}AI, Aixin Liu, Bei Feng, Bing Xue, Bingxuan Wang, Bochao Wu, Chengda Lu, Chenggang Zhao, Chengqi Deng, Chenyu Zhang, Chong Ruan, Damai Dai, Daya Guo, Dejian Yang, Deli Chen, Dongjie Ji, Erhang Li, Fangyun Lin, Fucong Dai, Fuli Luo, Guangbo Hao, Guanting Chen, Guowei Li, H.~Zhang, Han Bao, Hanwei Xu, Haocheng Wang, Haowei Zhang, Honghui Ding, Huajian Xin, Huazuo Gao, Hui Li, Hui Qu, J.~L. Cai, Jian Liang, Jianzhong Guo, Jiaqi Ni, Jiashi Li, Jiawei Wang, Jin Chen, Jingchang Chen, Jingyang Yuan, Junjie Qiu, Junlong Li, Junxiao Song, Kai Dong, Kai Hu, Kaige Gao, Kang Guan, Kexin Huang, Kuai Yu, Lean Wang, Lecong Zhang, Lei Xu, Leyi Xia, Liang Zhao, Litong Wang, Liyue Zhang, Meng Li, Miaojun Wang, Mingchuan Zhang, Minghua Zhang, Minghui Tang, Mingming Li, Ning Tian, Panpan Huang, Peiyi Wang, Peng Zhang, Qiancheng Wang, Qihao Zhu, Qinyu Chen, Qiushi Du, R.~J. Chen, R.~L. Jin, Ruiqi Ge, Ruisong Zhang, Ruizhe Pan, Runji Wang, Runxin Xu, Ruoyu Zhang, Ruyi Chen, S.~S. Li, Shanghao Lu, Shangyan Zhou,
  Shanhuang Chen, Shaoqing Wu, Shengfeng Ye, Shirong Ma, Shiyu Wang, Shuang Zhou, Shuiping Yu, Shunfeng Zhou, Shuting Pan, T.~Wang, Tao Yun, Tian Pei, Tianyu Sun, W.~L. Xiao, and Wangding Zeng.
\newblock Deepseek-v3 technical report.
\newblock \emph{CoRR}, abs/2412.19437, 2024.
\newblock \doi{10.48550/ARXIV.2412.19437}.
\newblock URL \url{https://doi.org/10.48550/arXiv.2412.19437}.

\bibitem[Fang et~al.(2025)Fang, Jiang, Wang, Ma, Shi, Wang, He, and Chua]{DBLP:conf/iclr/FangJWMSW0C25}
Junfeng Fang, Houcheng Jiang, Kun Wang, Yunshan Ma, Jie Shi, Xiang Wang, Xiangnan He, and Tat{-}Seng Chua.
\newblock Alphaedit: Null-space constrained knowledge editing for language models.
\newblock In \emph{The Thirteenth International Conference on Learning Representations, {ICLR} 2025, Singapore, April 24-28, 2025}. OpenReview.net, 2025.
\newblock URL \url{https://openreview.net/forum?id=HvSytvg3Jh}.

\bibitem[Gu et~al.(2024)Gu, Xu, Ma, Lu, Ling, Chang, and Peng]{DBLP:conf/emnlp/GuXMLLCP24}
Jia{-}Chen Gu, Hao{-}Xiang Xu, Jun{-}Yu Ma, Pan Lu, Zhen{-}Hua Ling, Kai{-}Wei Chang, and Nanyun Peng.
\newblock Model editing harms general abilities of large language models: Regularization to the rescue.
\newblock In Yaser Al{-}Onaizan, Mohit Bansal, and Yun{-}Nung Chen (eds.), \emph{Proceedings of the 2024 Conference on Empirical Methods in Natural Language Processing, {EMNLP} 2024, Miami, FL, USA, November 12-16, 2024}, pp.\  16801--16819. Association for Computational Linguistics, 2024.
\newblock \doi{10.18653/V1/2024.EMNLP-MAIN.934}.
\newblock URL \url{https://doi.org/10.18653/v1/2024.emnlp-main.934}.

\bibitem[Gupta et~al.(2024)Gupta, Rao, and Anumanchipalli]{DBLP:conf/acl/GuptaRA24}
Akshat Gupta, Anurag Rao, and Gopala Anumanchipalli.
\newblock Model editing at scale leads to gradual and catastrophic forgetting.
\newblock In Lun{-}Wei Ku, Andre Martins, and Vivek Srikumar (eds.), \emph{Findings of the Association for Computational Linguistics, {ACL} 2024, Bangkok, Thailand and virtual meeting, August 11-16, 2024}, pp.\  15202--15232. Association for Computational Linguistics, 2024.
\newblock \doi{10.18653/V1/2024.FINDINGS-ACL.902}.
\newblock URL \url{https://doi.org/10.18653/v1/2024.findings-acl.902}.

\bibitem[Hartvigsen et~al.(2023)Hartvigsen, Sankaranarayanan, Palangi, Kim, and Ghassemi]{DBLP:conf/nips/HartvigsenSPKG23}
Tom Hartvigsen, Swami Sankaranarayanan, Hamid Palangi, Yoon Kim, and Marzyeh Ghassemi.
\newblock Aging with {GRACE:} lifelong model editing with discrete key-value adaptors.
\newblock In Alice Oh, Tristan Naumann, Amir Globerson, Kate Saenko, Moritz Hardt, and Sergey Levine (eds.), \emph{Advances in Neural Information Processing Systems 36: Annual Conference on Neural Information Processing Systems 2023, NeurIPS 2023, New Orleans, LA, USA, December 10 - 16, 2023}, 2023.
\newblock URL \url{http://papers.nips.cc/paper\_files/paper/2023/hash/95b6e2ff961580e03c0a662a63a71812-Abstract-Conference.html}.

\bibitem[Horn \& Johnson(1985)Horn and Johnson]{horn1985matrix}
Roger~A. Horn and Charles~R. Johnson.
\newblock \emph{Matrix Analysis}.
\newblock Cambridge University Press, 1985.
\newblock ISBN 0-521-30586-1.

\bibitem[Huang et~al.(2025)Huang, Yu, Ma, Zhong, Feng, Wang, Chen, Peng, Feng, Qin, and Liu]{DBLP:journals/tois/HuangYMZFWCPFQL25}
Lei Huang, Weijiang Yu, Weitao Ma, Weihong Zhong, Zhangyin Feng, Haotian Wang, Qianglong Chen, Weihua Peng, Xiaocheng Feng, Bing Qin, and Ting Liu.
\newblock A survey on hallucination in large language models: Principles, taxonomy, challenges, and open questions.
\newblock \emph{{ACM} Trans. Inf. Syst.}, 43\penalty0 (2):\penalty0 42:1--42:55, 2025.
\newblock \doi{10.1145/3703155}.
\newblock URL \url{https://doi.org/10.1145/3703155}.

\bibitem[Ji et~al.(2023)Ji, Lee, Frieske, Yu, Su, Xu, Ishii, Bang, Madotto, and Fung]{DBLP:journals/csur/JiLFYSXIBMF23}
Ziwei Ji, Nayeon Lee, Rita Frieske, Tiezheng Yu, Dan Su, Yan Xu, Etsuko Ishii, Yejin Bang, Andrea Madotto, and Pascale Fung.
\newblock Survey of hallucination in natural language generation.
\newblock \emph{{ACM} Comput. Surv.}, 55\penalty0 (12):\penalty0 248:1--248:38, 2023.
\newblock \doi{10.1145/3571730}.
\newblock URL \url{https://doi.org/10.1145/3571730}.

\bibitem[Jiang et~al.(2025)Jiang, Fang, Zhang, Bi, Zhang, Wang, Liang, and Wang]{DBLP:conf/acl/JiangFZBZWL025}
Houcheng Jiang, Junfeng Fang, Tianyu Zhang, Baolong Bi, An~Zhang, Ruipeng Wang, Tao Liang, and Xiang Wang.
\newblock Neuron-level sequential editing for large language models.
\newblock In Wanxiang Che, Joyce Nabende, Ekaterina Shutova, and Mohammad~Taher Pilehvar (eds.), \emph{Proceedings of the 63rd Annual Meeting of the Association for Computational Linguistics (Volume 1: Long Papers), {ACL} 2025, Vienna, Austria, July 27 - August 1, 2025}, pp.\  16678--16702. Association for Computational Linguistics, 2025.
\newblock URL \url{https://aclanthology.org/2025.acl-long.815/}.

\bibitem[Kwiatkowski et~al.(2019)Kwiatkowski, Palomaki, Redfield, Collins, Parikh, Alberti, Epstein, Polosukhin, Devlin, Lee, Toutanova, Jones, Kelcey, Chang, Dai, Uszkoreit, Le, and Petrov]{DBLP:journals/tacl/KwiatkowskiPRCP19}
Tom Kwiatkowski, Jennimaria Palomaki, Olivia Redfield, Michael Collins, Ankur~P. Parikh, Chris Alberti, Danielle Epstein, Illia Polosukhin, Jacob Devlin, Kenton Lee, Kristina Toutanova, Llion Jones, Matthew Kelcey, Ming{-}Wei Chang, Andrew~M. Dai, Jakob Uszkoreit, Quoc Le, and Slav Petrov.
\newblock Natural questions: a benchmark for question answering research.
\newblock \emph{Trans. Assoc. Comput. Linguistics}, 7:\penalty0 452--466, 2019.
\newblock \doi{10.1162/TACL\_A\_00276}.
\newblock URL \url{https://doi.org/10.1162/tacl\_a\_00276}.

\bibitem[Lee et~al.(2019)Lee, Chang, and Toutanova]{DBLP:conf/acl/LeeCT19}
Kenton Lee, Ming{-}Wei Chang, and Kristina Toutanova.
\newblock Latent retrieval for weakly supervised open domain question answering.
\newblock In Anna Korhonen, David~R. Traum, and Llu{\'{\i}}s M{\`{a}}rquez (eds.), \emph{Proceedings of the 57th Conference of the Association for Computational Linguistics, {ACL} 2019, Florence, Italy, July 28- August 2, 2019, Volume 1: Long Papers}, pp.\  6086--6096. Association for Computational Linguistics, 2019.
\newblock \doi{10.18653/V1/P19-1612}.
\newblock URL \url{https://doi.org/10.18653/v1/p19-1612}.

\bibitem[Levy et~al.(2017)Levy, Seo, Choi, and Zettlemoyer]{DBLP:conf/conll/LevySCZ17}
Omer Levy, Minjoon Seo, Eunsol Choi, and Luke Zettlemoyer.
\newblock Zero-shot relation extraction via reading comprehension.
\newblock In Roger Levy and Lucia Specia (eds.), \emph{Proceedings of the 21st Conference on Computational Natural Language Learning (CoNLL 2017), Vancouver, Canada, August 3-4, 2017}, pp.\  333--342. Association for Computational Linguistics, 2017.
\newblock \doi{10.18653/V1/K17-1034}.
\newblock URL \url{https://doi.org/10.18653/v1/K17-1034}.

\bibitem[Li et~al.(2025)Li, Jiang, Chen, Bi, Zhou, Sun, Fang, and Wang]{DBLP:conf/icml/0001JCBZSF025}
Zherui Li, Houcheng Jiang, Hao Chen, Baolong Bi, Zhenhong Zhou, Fei Sun, Junfeng Fang, and Xiang Wang.
\newblock Reinforced lifelong editing for language models.
\newblock In Aarti Singh, Maryam Fazel, Daniel Hsu, Simon Lacoste{-}Julien, Felix Berkenkamp, Tegan Maharaj, Kiri Wagstaff, and Jerry Zhu (eds.), \emph{Forty-second International Conference on Machine Learning, {ICML} 2025, Vancouver, BC, Canada, July 13-19, 2025}, volume 267 of \emph{Proceedings of Machine Learning Research}. {PMLR} / OpenReview.net, 2025.
\newblock URL \url{https://proceedings.mlr.press/v267/li25an.html}.

\bibitem[Liu et~al.(2018)Liu, Lin, Liu, Liu, Yu, Dai, and Song]{DBLP:conf/nips/LiuLLLYDS18}
Weiyang Liu, Rongmei Lin, Zhen Liu, Lixin Liu, Zhiding Yu, Bo~Dai, and Le~Song.
\newblock Learning towards minimum hyperspherical energy.
\newblock In Samy Bengio, Hanna~M. Wallach, Hugo Larochelle, Kristen Grauman, Nicol{\`{o}} Cesa{-}Bianchi, and Roman Garnett (eds.), \emph{Advances in Neural Information Processing Systems 31: Annual Conference on Neural Information Processing Systems 2018, NeurIPS 2018, December 3-8, 2018, Montr{\'{e}}al, Canada}, pp.\  6225--6236, 2018.
\newblock URL \url{https://proceedings.neurips.cc/paper/2018/hash/177540c7bcb8db31697b601642eac8d4-Abstract.html}.

\bibitem[Liu et~al.(2021)Liu, Lin, Liu, Xiong, Sch{\"{o}}lkopf, and Weller]{DBLP:conf/aistats/LiuLL0SW21}
Weiyang Liu, Rongmei Lin, Zhen Liu, Li~Xiong, Bernhard Sch{\"{o}}lkopf, and Adrian Weller.
\newblock Learning with hyperspherical uniformity.
\newblock In Arindam Banerjee and Kenji Fukumizu (eds.), \emph{The 24th International Conference on Artificial Intelligence and Statistics, {AISTATS} 2021, April 13-15, 2021, Virtual Event}, volume 130 of \emph{Proceedings of Machine Learning Research}, pp.\  1180--1188. {PMLR}, 2021.
\newblock URL \url{http://proceedings.mlr.press/v130/liu21d.html}.

\bibitem[Ma et~al.(2025)Ma, Wang, Xu, Ling, and Gu]{DBLP:conf/iclr/MaWXLG25}
Jun{-}Yu Ma, Hong Wang, Hao{-}Xiang Xu, Zhen{-}Hua Ling, and Jia{-}Chen Gu.
\newblock Perturbation-restrained sequential model editing.
\newblock In \emph{The Thirteenth International Conference on Learning Representations, {ICLR} 2025, Singapore, April 24-28, 2025}. OpenReview.net, 2025.
\newblock URL \url{https://openreview.net/forum?id=bfI8cp8qmk}.

\bibitem[Meng et~al.(2022)Meng, Bau, Andonian, and Belinkov]{DBLP:conf/nips/MengBAB22}
Kevin Meng, David Bau, Alex Andonian, and Yonatan Belinkov.
\newblock Locating and editing factual associations in {GPT}.
\newblock In Sanmi Koyejo, S.~Mohamed, A.~Agarwal, Danielle Belgrave, K.~Cho, and A.~Oh (eds.), \emph{Advances in Neural Information Processing Systems 35: Annual Conference on Neural Information Processing Systems 2022, NeurIPS 2022, New Orleans, LA, USA, November 28 - December 9, 2022}, 2022.
\newblock URL \url{http://papers.nips.cc/paper\_files/paper/2022/hash/6f1d43d5a82a37e89b0665b33bf3a182-Abstract-Conference.html}.

\bibitem[Meng et~al.(2023)Meng, Sharma, Andonian, Belinkov, and Bau]{DBLP:conf/iclr/MengSABB23}
Kevin Meng, Arnab~Sen Sharma, Alex~J. Andonian, Yonatan Belinkov, and David Bau.
\newblock Mass-editing memory in a transformer.
\newblock In \emph{The Eleventh International Conference on Learning Representations, {ICLR} 2023, Kigali, Rwanda, May 1-5, 2023}. OpenReview.net, 2023.
\newblock URL \url{https://openreview.net/forum?id=MkbcAHIYgyS}.

\bibitem[{Meta AI}(2024)]{meta2024llama4}
{Meta AI}.
\newblock Introducing llama 4: Advancing multimodal intelligence, 2024.
\newblock URL \url{https://ai.meta.com/blog/llama-4-multimodal-intelligence/}.

\bibitem[Mitchell et~al.(2022)Mitchell, Lin, Bosselut, Finn, and Manning]{DBLP:conf/iclr/MitchellLBFM22}
Eric Mitchell, Charles Lin, Antoine Bosselut, Chelsea Finn, and Christopher~D. Manning.
\newblock Fast model editing at scale.
\newblock In \emph{The Tenth International Conference on Learning Representations, {ICLR} 2022, Virtual Event, April 25-29, 2022}. OpenReview.net, 2022.
\newblock URL \url{https://openreview.net/forum?id=0DcZxeWfOPt}.

\bibitem[OpenAI(2025)]{openai2025gpt5}
OpenAI.
\newblock Introducing gpt-5.
\newblock Online, 2025.
\newblock URL \url{https://openai.com/index/introducing-gpt-5/}.
\newblock [Large language model announcement].

\bibitem[Parlett(1998)]{parlett1998symmetric}
Beresford~N. Parlett.
\newblock \emph{The Symmetric Eigenvalue Problem}.
\newblock Classics in Applied Mathematics. SIAM, 1998.
\newblock ISBN 0-89871-402-8.

\bibitem[Qiu et~al.(2023)Qiu, Liu, Feng, Xue, Feng, Liu, Zhang, Weller, and Sch{\"{o}}lkopf]{DBLP:conf/nips/QiuLFXFLZWS23}
Zeju Qiu, Weiyang Liu, Haiwen Feng, Yuxuan Xue, Yao Feng, Zhen Liu, Dan Zhang, Adrian Weller, and Bernhard Sch{\"{o}}lkopf.
\newblock Controlling text-to-image diffusion by orthogonal finetuning.
\newblock In Alice Oh, Tristan Naumann, Amir Globerson, Kate Saenko, Moritz Hardt, and Sergey Levine (eds.), \emph{Advances in Neural Information Processing Systems 36: Annual Conference on Neural Information Processing Systems 2023, NeurIPS 2023, New Orleans, LA, USA, December 10 - 16, 2023}, 2023.
\newblock URL \url{http://papers.nips.cc/paper\_files/paper/2023/hash/faacb7a4827b4d51e201666b93ab5fa7-Abstract-Conference.html}.

\bibitem[Rodr{\'{\i}}guez et~al.(2017)Rodr{\'{\i}}guez, Gonz{\`{a}}lez, Cucurull, Gonfaus, and Roca]{DBLP:conf/iclr/Rodriguez0CGR17}
Pau Rodr{\'{\i}}guez, Jordi Gonz{\`{a}}lez, Guillem Cucurull, Josep~M. Gonfaus, and F.~Xavier Roca.
\newblock Regularizing cnns with locally constrained decorrelations.
\newblock In \emph{5th International Conference on Learning Representations, {ICLR} 2017, Toulon, France, April 24-26, 2017, Conference Track Proceedings}. OpenReview.net, 2017.
\newblock URL \url{https://openreview.net/forum?id=ByOvsIqeg}.

\bibitem[Smerkous et~al.(2024)Smerkous, Bai, and Li]{DBLP:conf/nips/SmerkousBL24}
David Smerkous, Qinxun Bai, and Fuxin Li.
\newblock Enhancing diversity in bayesian deep learning via hyperspherical energy minimization of {CKA}.
\newblock In Amir Globersons, Lester Mackey, Danielle Belgrave, Angela Fan, Ulrich Paquet, Jakub~M. Tomczak, and Cheng Zhang (eds.), \emph{Advances in Neural Information Processing Systems 38: Annual Conference on Neural Information Processing Systems 2024, NeurIPS 2024, Vancouver, BC, Canada, December 10 - 15, 2024}, 2024.
\newblock URL \url{http://papers.nips.cc/paper\_files/paper/2024/hash/f9e72ee379bb781f3005775c870a3871-Abstract-Conference.html}.

\bibitem[Spearman(1904)]{spearman1904proof}
C~Spearman.
\newblock The proof and measurement of association between two things.
\newblock \emph{The American Journal of Psychology}, 15\penalty0 (1):\penalty0 72--101, 1904.

\bibitem[Team(2024)]{qwen2.5}
Qwen Team.
\newblock Qwen2.5: A party of foundation models, September 2024.
\newblock URL \url{https://qwenlm.github.io/blog/qwen2.5/}.

\bibitem[van~der Maaten \& Hinton(2008)van~der Maaten and Hinton]{JMLR:v9:vandermaaten08a}
Laurens van~der Maaten and Geoffrey Hinton.
\newblock Visualizing data using t-sne.
\newblock \emph{Journal of Machine Learning Research}, 9\penalty0 (86):\penalty0 2579--2605, 2008.
\newblock URL \url{http://jmlr.org/papers/v9/vandermaaten08a.html}.

\bibitem[Wolf et~al.(2019)Wolf, Debut, Sanh, Chaumond, Delangue, Moi, Cistac, Rault, Louf, Funtowicz, and Brew]{DBLP:journals/corr/abs-1910-03771}
Thomas Wolf, Lysandre Debut, Victor Sanh, Julien Chaumond, Clement Delangue, Anthony Moi, Pierric Cistac, Tim Rault, R{\'{e}}mi Louf, Morgan Funtowicz, and Jamie Brew.
\newblock Huggingface's transformers: State-of-the-art natural language processing.
\newblock \emph{CoRR}, abs/1910.03771, 2019.
\newblock URL \url{http://arxiv.org/abs/1910.03771}.

\bibitem[Xie et~al.(2016)Xie, Zhu, and Xing]{DBLP:conf/icml/XieZX16}
Pengtao Xie, Jun Zhu, and Eric~P. Xing.
\newblock Diversity-promoting bayesian learning of latent variable models.
\newblock In Maria{-}Florina Balcan and Kilian~Q. Weinberger (eds.), \emph{Proceedings of the 33nd International Conference on Machine Learning, {ICML} 2016, New York City, NY, USA, June 19-24, 2016}, volume~48 of \emph{{JMLR} Workshop and Conference Proceedings}, pp.\  59--68. JMLR.org, 2016.
\newblock URL \url{http://proceedings.mlr.press/v48/xiea16.html}.

\bibitem[Xie et~al.(2017{\natexlab{a}})Xie, Deng, Zhou, Kumar, Yu, Zou, and Xing]{DBLP:conf/icml/XieDZKYZX17}
Pengtao Xie, Yuntian Deng, Yi~Zhou, Abhimanu Kumar, Yaoliang Yu, James Zou, and Eric~P. Xing.
\newblock Learning latent space models with angular constraints.
\newblock In Doina Precup and Yee~Whye Teh (eds.), \emph{Proceedings of the 34th International Conference on Machine Learning, {ICML} 2017, Sydney, NSW, Australia, 6-11 August 2017}, volume~70 of \emph{Proceedings of Machine Learning Research}, pp.\  3799--3810. {PMLR}, 2017{\natexlab{a}}.
\newblock URL \url{http://proceedings.mlr.press/v70/xie17a.html}.

\bibitem[Xie et~al.(2017{\natexlab{b}})Xie, Singh, and Xing]{DBLP:conf/icml/XieSX17}
Pengtao Xie, Aarti Singh, and Eric~P. Xing.
\newblock Uncorrelation and evenness: a new diversity-promoting regularizer.
\newblock In Doina Precup and Yee~Whye Teh (eds.), \emph{Proceedings of the 34th International Conference on Machine Learning, {ICML} 2017, Sydney, NSW, Australia, 6-11 August 2017}, volume~70 of \emph{Proceedings of Machine Learning Research}, pp.\  3811--3820. {PMLR}, 2017{\natexlab{b}}.
\newblock URL \url{http://proceedings.mlr.press/v70/xie17b.html}.

\bibitem[Yang et~al.(2025)Yang, Li, Yang, Zhang, Hui, Zheng, Yu, Gao, Huang, Lv, Zheng, Liu, Zhou, Huang, Hu, Ge, Wei, Lin, Tang, Yang, Tu, Zhang, Yang, Yang, Zhou, Zhou, Lin, Dang, Bao, Yang, Yu, Deng, Li, Xue, Li, Zhang, Wang, Zhu, Men, Gao, Liu, Luo, Li, Tang, Yin, Ren, Wang, Zhang, Ren, Fan, Su, Zhang, Zhang, Wan, Liu, Wang, Cui, Zhang, Zhou, and Qiu]{DBLP:journals/corr/abs-2505-09388}
An~Yang, Anfeng Li, Baosong Yang, Beichen Zhang, Binyuan Hui, Bo~Zheng, Bowen Yu, Chang Gao, Chengen Huang, Chenxu Lv, Chujie Zheng, Dayiheng Liu, Fan Zhou, Fei Huang, Feng Hu, Hao Ge, Haoran Wei, Huan Lin, Jialong Tang, Jian Yang, Jianhong Tu, Jianwei Zhang, Jian Yang, Jiaxi Yang, Jingren Zhou, Jingren Zhou, Junyang Lin, Kai Dang, Keqin Bao, Kexin Yang, Le~Yu, Lianghao Deng, Mei Li, Mingfeng Xue, Mingze Li, Pei Zhang, Peng Wang, Qin Zhu, Rui Men, Ruize Gao, Shixuan Liu, Shuang Luo, Tianhao Li, Tianyi Tang, Wenbiao Yin, Xingzhang Ren, Xinyu Wang, Xinyu Zhang, Xuancheng Ren, Yang Fan, Yang Su, Yichang Zhang, Yinger Zhang, Yu~Wan, Yuqiong Liu, Zekun Wang, Zeyu Cui, Zhenru Zhang, Zhipeng Zhou, and Zihan Qiu.
\newblock Qwen3 technical report.
\newblock \emph{CoRR}, abs/2505.09388, 2025.
\newblock \doi{10.48550/ARXIV.2505.09388}.
\newblock URL \url{https://doi.org/10.48550/arXiv.2505.09388}.

\bibitem[Yang et~al.(2024{\natexlab{a}})Yang, Sun, Ma, Liu, Yin, and Cheng]{DBLP:conf/acl/Yang0MLYC24}
Wanli Yang, Fei Sun, Xinyu Ma, Xun Liu, Dawei Yin, and Xueqi Cheng.
\newblock The butterfly effect of model editing: Few edits can trigger large language models collapse.
\newblock In Lun{-}Wei Ku, Andre Martins, and Vivek Srikumar (eds.), \emph{Findings of the Association for Computational Linguistics, {ACL} 2024, Bangkok, Thailand and virtual meeting, August 11-16, 2024}, pp.\  5419--5437. Association for Computational Linguistics, 2024{\natexlab{a}}.
\newblock \doi{10.18653/V1/2024.FINDINGS-ACL.322}.
\newblock URL \url{https://doi.org/10.18653/v1/2024.findings-acl.322}.

\bibitem[Yang et~al.(2024{\natexlab{b}})Yang, Sun, Tan, Ma, Su, Yin, and Shen]{DBLP:conf/emnlp/Yang0TMSYS24}
Wanli Yang, Fei Sun, Jiajun Tan, Xinyu Ma, Du~Su, Dawei Yin, and Huawei Shen.
\newblock The fall of {ROME:} understanding the collapse of llms in model editing.
\newblock In Yaser Al{-}Onaizan, Mohit Bansal, and Yun{-}Nung Chen (eds.), \emph{Findings of the Association for Computational Linguistics: {EMNLP} 2024, Miami, Florida, USA, November 12-16, 2024}, pp.\  4079--4087. Association for Computational Linguistics, 2024{\natexlab{b}}.
\newblock \doi{10.18653/V1/2024.FINDINGS-EMNLP.236}.
\newblock URL \url{https://doi.org/10.18653/v1/2024.findings-emnlp.236}.

\bibitem[Yu et~al.(2024)Yu, Chen, Zhou, and He]{DBLP:conf/aaai/YuCZH24}
Lang Yu, Qin Chen, Jie Zhou, and Liang He.
\newblock {MELO:} enhancing model editing with neuron-indexed dynamic lora.
\newblock In Michael~J. Wooldridge, Jennifer~G. Dy, and Sriraam Natarajan (eds.), \emph{Thirty-Eighth {AAAI} Conference on Artificial Intelligence, {AAAI} 2024, Thirty-Sixth Conference on Innovative Applications of Artificial Intelligence, {IAAI} 2024, Fourteenth Symposium on Educational Advances in Artificial Intelligence, {EAAI} 2014, February 20-27, 2024, Vancouver, Canada}, pp.\  19449--19457. {AAAI} Press, 2024.
\newblock \doi{10.1609/AAAI.V38I17.29916}.
\newblock URL \url{https://doi.org/10.1609/aaai.v38i17.29916}.

\bibitem[Zhang et~al.(2024{\natexlab{a}})Zhang, Xi, Luo, Wang, Tian, Yao, Zhang, Deng, Sun, Liang, Zhang, Zhu, Zhou, and Chen]{DBLP:conf/vldb/0001XL0TYZDSL0Z24}
Ningyu Zhang, Zekun Xi, Yujie Luo, Peng Wang, Bozhong Tian, Yunzhi Yao, Jintian Zhang, Shumin Deng, Mengshu Sun, Lei Liang, Zhiqiang Zhang, Xiaowei Zhu, Jun Zhou, and Huajun Chen.
\newblock Oneedit: {A} neural-symbolic collaboratively knowledge editing system.
\newblock In \emph{Proceedings of Workshops at the 50th International Conference on Very Large Data Bases, {VLDB} 2024, Guangzhou, China, August 26-30, 2024}. VLDB.org, 2024{\natexlab{a}}.
\newblock URL \url{https://vldb.org/workshops/2024/proceedings/LLM+KG/LLM+KG-2.pdf}.

\bibitem[Zhang et~al.(2024{\natexlab{b}})Zhang, Yao, Tian, Wang, Deng, Wang, Xi, Mao, Zhang, Ni, Cheng, Xu, Xu, Gu, Jiang, Xie, Huang, Liang, Zhang, Zhu, Zhou, and Chen]{DBLP:journals/corr/abs-2401-01286}
Ningyu Zhang, Yunzhi Yao, Bozhong Tian, Peng Wang, Shumin Deng, Mengru Wang, Zekun Xi, Shengyu Mao, Jintian Zhang, Yuansheng Ni, Siyuan Cheng, Ziwen Xu, Xin Xu, Jia{-}Chen Gu, Yong Jiang, Pengjun Xie, Fei Huang, Lei Liang, Zhiqiang Zhang, Xiaowei Zhu, Jun Zhou, and Huajun Chen.
\newblock A comprehensive study of knowledge editing for large language models.
\newblock \emph{CoRR}, abs/2401.01286, 2024{\natexlab{b}}.
\newblock \doi{10.48550/ARXIV.2401.01286}.
\newblock URL \url{https://doi.org/10.48550/arXiv.2401.01286}.

\bibitem[Zheng et~al.(2023)Zheng, Li, Dong, Fan, Wu, Xu, and Chang]{DBLP:conf/emnlp/ZhengLDFWXC23}
Ce~Zheng, Lei Li, Qingxiu Dong, Yuxuan Fan, Zhiyong Wu, Jingjing Xu, and Baobao Chang.
\newblock Can we edit factual knowledge by in-context learning?
\newblock In Houda Bouamor, Juan Pino, and Kalika Bali (eds.), \emph{Proceedings of the 2023 Conference on Empirical Methods in Natural Language Processing, {EMNLP} 2023, Singapore, December 6-10, 2023}, pp.\  4862--4876. Association for Computational Linguistics, 2023.
\newblock \doi{10.18653/V1/2023.EMNLP-MAIN.296}.
\newblock URL \url{https://doi.org/10.18653/v1/2023.emnlp-main.296}.

\bibitem[Zhu et~al.(2020)Zhu, Rawat, Zaheer, Bhojanapalli, Li, Yu, and Kumar]{DBLP:journals/corr/abs-2012-00363}
Chen Zhu, Ankit~Singh Rawat, Manzil Zaheer, Srinadh Bhojanapalli, Daliang Li, Felix~X. Yu, and Sanjiv Kumar.
\newblock Modifying memories in transformer models.
\newblock \emph{CoRR}, abs/2012.00363, 2020.
\newblock URL \url{https://arxiv.org/abs/2012.00363}.

\end{thebibliography}
\bibliographystyle{iclr2026_conference}

\clearpage
\appendix
\section{Usage of LLMs}
\label{sec:llm_usage}
Throughout the preparation of this manuscript, we used LLMs to assist with improving grammar, clarity, and wording in parts of this work. The use of LLMs was limited to language refinement, with all ideas, analyses, and conclusions solely developed by the authors.

\section{Preliminaries of Model Editing}
\label{appendix:model_editing}

Model editing aims to refine a pre-trained model by applying one or more edits, where each edit replaces a factual association $(s, r, o)$ with new knowledge $(s, r, o^*)$ \citep{DBLP:conf/emnlp/Yang0TMSYS24,DBLP:conf/icml/0001JCBZSF025}. After editing, the model is expected to recall the updated object $o^*$ when given a natural language prompt $p(s, r)$, such as ``The President of the United States is'' \citep{DBLP:conf/vldb/0001XL0TYZDSL0Z24}.

To achieve this, locating-and-editing methods have been proposed for effective model updates \citep{DBLP:conf/acl/Yang0MLYC24}. These methods typically follow three steps \citep{DBLP:conf/acl/JiangFZBZWL025}:

\paragraph{Step 1: Locating Influential Layers.}
The first step is to identify the specific FFN layers that encode the target knowledge using causal tracing \citep{DBLP:conf/nips/MengBAB22}. This method involves injecting Gaussian noise into the hidden states and progressively restoring them to their original values. By analyzing the degree to which the original output recovers, the influential layers can be pinpointed as the targets for editing.

\paragraph{Step 2: Acquiring the Expected Output.}
The second step aims to obtain the desired output of the critical layers identified in Step~1. Following the key--value theory, the key $\bm{k}$, which encodes $(s,r)$, is processed through the output weights $\bm{W}_{\mathrm{out}}^{l}$ to produce the original value $\bm{v}$ encoding $o$. Formally,
\begin{equation}
    \bm{k} \triangleq \sigma\!\big(\bm{W}_{\mathrm{in}}^{l}\,\gamma(\bm{h}^{l-1}+\bm{\bm{\alpha}}^{l})\big),
    \qquad
    \bm{v} \triangleq \bm{m}^{l}=\bm{W}_{\mathrm{out}}^{l}k .
    \label{eq:me_kv}
\end{equation}

To perform editing, $\bm{v}$ is expected to be replaced with a new value $\bm{v}^{*}$ encoding $o^{*}$. To this end, current methods typically use gradient descent on $\Delta \bm{W}$, maximizing the probability that the model outputs the word associated with $o^{*}$ \citep{DBLP:conf/iclr/MengSABB23}. The optimization objective is as follows:
\begin{equation}
    \bm{v}^{*}
    = \bm{v} + \arg\min_{\Delta \bm{W}^{l}}
    \Big(
        -\log \mathbb{P}_{\,f^{l}_{\bm{W}_{\mathrm{out}}}(\bm{m}^{l}+=\Delta \bm{W}^{l})}
        \big[\,o^{*}\mid (s,r)\,\big]
    \Big),
    \label{eq:me_opt}
\end{equation}

where $f^{l}_{\bm{W}_{\mathrm{out}}}(\bm{m}^{l}+=\Delta \bm{W})$ represents the original model with $\bm{m}^{l}$ updated to $\bm{m}^{l}+\Delta \bm{W}$.

\paragraph{Step 3: Updating $\bm{W}_{\mathrm{out}}^{l}$.}
This step aims to update the parameters $\bm{W}_{\mathrm{out}}^{l}$. It includes a factual set $\{\bm{K}_{1},\bm{V}_{1}\}$ containing $u$ new associations, while preserving the set $\{\bm{K}_{0},\bm{V}_{0}\}$ containing $n$ original associations. Specifically,
\begin{equation}
\begin{array}{c}
    \bm{K}_{0} = [\,\bm{k}_{1}\;\bm{k}_{2}\;\cdots\;\bm{k}_{n}\,], \quad
    \bm{V}_{0} = [\,\bm{v}_{1}\;\bm{v}_{2}\;\cdots\;\bm{v}_{n}\,], \\[6pt]
    \bm{K}_{1} = [\,\bm{k}_{n+1}\;\bm{k}_{n+2}\;\cdots\;\bm{k}_{n+u}\,], \quad
    \bm{V}_{1} = [\,\bm{v}^{*}_{\,n+1}\;\bm{v}^{*}_{\,n+2}\;\cdots\;\bm{v}^{*}_{\,n+u}\,]
\end{array}
\label{eq:me_sets}
\end{equation}

where $\bm{k}$ and $\bm{v}$ are defined in Eqn.~\ref{eq:me_kv} and their subscripts represent the index of the knowledge. Based on these, the objective can be defined as:
\begin{equation}
    \tilde{\bm{W}}^{l}_{\mathrm{out}}
    \triangleq
    \arg\min_{\hat{\bm{W}}}
    \left(
        \sum_{i=1}^{n}\!\big\|\hat{\bm{W}}\bm{k}_{i}-\bm{v}_{i}\big\|^{2}
        +
        \sum_{i=n+1}^{n+u}\!\big\|\hat{\bm{W}}\bm{k}_{i}-\bm{v}^{*}_{i}\big\|^{2}
    \right).
    \label{eq:me_obj}
\end{equation}

By applying the normal equation, its closed-form solution can be derived:
\begin{equation}
    \tilde{\bm{W}}^{l}_{\mathrm{out}}
    =
    \big(\bm{M}_{1}-\bm{W}^{l}_{\mathrm{out}}\bm{K}_{1}\big)\,\bm{K}_{1}^{\top}
    \big(\bm{K}_{0}\bm{K}_{0}^{\top}+\bm{K}_{1}\bm{K}_{1}^{\top}\big)^{-1}
    + \bm{W}^{l}_{\mathrm{out}} .
    \label{eq:me_closedform}
\end{equation}

In practice, model editing methods often update parameters across multiple layers to improve effectiveness. For more details, see \citep{DBLP:conf/iclr/MengSABB23}.

\clearpage

\section{Theoretical Proofs}

\subsection{Proof of Correlation between Hyperspherical Energy and Editing Stability}
\label{appendix:Correlation}

\paragraph{Objective for Preserving Original Knowledge}
We begin by assuming that the original knowledge base can be expressed as $\{\bm{k}_i,\ \bm{v}_i\}\quad i = 1, \ldots, N, \quad \bm{k}_i \in \mathbb{R}^p, \; \bm{v}_i \in \mathbb{R}^q$, while the new knowledge base is given by $\{\tilde{\bm{k}}_i,\ \tilde{\bm{v}}_i\}\quad i = 1, \ldots, N, \quad \tilde{\bm{k}}_i \in \mathbb{R}^p, \; \tilde{\bm{v}}_i \in \mathbb{R}^q$

And the knowledge mapping is governed by the weight matrix $W \in \mathbb{R}^{p \times q}$ such that
\begin{equation}
\bm{W} \bm{k}_i = \bm{v}_i, 
\qquad 
(\bm{W}+\Delta \bm{W})\tilde{\bm{k}}_i = \tilde{\bm{v}}_i.
\end{equation}

When analyzing the destroy to the original knowledge set, there is shift brought by the perturbation $\Delta \bm{W}$ which can be expressed as:
\begin{equation}
(\bm{W}+\Delta \bm{W})\bm{k}_i = \bm{v}_i + \Delta \bm{W}\bm{k}_i
\end{equation}
where we define $\Delta \bm{v}_i = \Delta \bm{W}\bm{k}_i$ as the destroy to the previous knowledge set. In addition, if $\bm{k}_i \in \mathrm{null}(\Delta \bm{W})$, i.e., in the null space of $\Delta \bm{W}$, then $\Delta \bm{W} \bm{k}_i = \Delta \bm{v}_i = 0$.

The corresponding objective is thus to minimize the perturbation magnitude, given by
\begin{equation}
\min \frac{1}{N}\sum_{i=1}^N \|\Delta \bm{v}_i\|
\quad \equiv \quad 
\min \frac{1}{N}\sum_{i=1}^N \|\Delta \bm{W} \bm{k}_i\|.
\end{equation}
To make this tractable, assume that each input vector $\bm{k}_i$ can be approximated in terms of the first $\bm{K}$ basis vectors $\{\bm{e_j}\}$ as
\begin{equation}
\bm{k}_i = \sum_{j=1}^K \bm{\alpha_j} \bm{e_j} + \varepsilon_i,
\end{equation}
where $\varepsilon_i$ is a small noise term. If we denote
\begin{equation}
\Delta \bm{W} \cdot \bm{e_j} = \bm{f_j},
\qquad 
\varepsilon_i, \bm{e_j}, \bm{f_j} \in \mathbb{R}^q,
\end{equation}
then the perturbation objective can be rewritten as:
\begin{equation}
\begin{aligned}
& \min \frac{1}{N}\sum_{i=1}^N 
        \left\| \Delta \bm{W} \cdot 
        \left( \sum_{j=1}^K \bm{\alpha_j} \bm{e_j} + \varepsilon_i \right) \right\| \\[2pt]
& \leq \frac{1}{N}\sum_{i=1}^N 
        \left\|\sum_{j=1}^K \bm{\alpha_j} \bm{f_j} + \Delta \bm{W} \varepsilon_i \right\| \\[2pt]
& \leq \frac{1}{N}\sum_{i=1}^N 
        \sum_{j=1}^K |\bm{\alpha_j} \bm{f_j}| + \|\Delta \bm{W}\| |\varepsilon_i| \\[2pt]
& \le \frac{1}{N}\sum_{i=1}^N
  \sum_{j=1}^K |\bm{\alpha_j}|\,\lVert \bm{f_j}\rVert \;+\; \varepsilon_{\max}\,\lVert \boldsymbol{\Delta W}\rVert .
\end{aligned}
\label{eq:perturb_bound}
\end{equation}
where $\varepsilon_{\max} = \max_{i} \|\varepsilon_i\|$. This shows that minimizing $\|f_j\|$ directly reduces the upper bound of the perturbation, and therefore enhances the stability of the editing process.

\paragraph{Definitions}
Let the model's weights be represented by a matrix $\bm{W} \in \mathbb{R}^{p \times q}$, whose rows are the neuron vectors $\bm{w}_1, \dots, \bm{w}_p \in \mathbb{R}^q$. An edit or update introduces a perturbation to this matrix, denoted by $\Delta \bm{W} \in \mathbb{R}^{p \times q}$, with corresponding row-wise perturbations $\Delta \bm{w}_1, \dots, \Delta \bm{w}_p$.

We define two key scalar quantities to measure the effects of this perturbation:

\begin{itemize}[leftmargin=*, itemsep=0pt, parsep=0pt]
    \item \textbf{output perturbation ($\Delta \bm{V}$).} This quantity measures the total squared change in the model's output, aggregated over a set of $N$ input vectors $\{\bm{k}_i\}_{i=1}^N$.
    \begin{equation}
        \Delta \bm{V} \triangleq \sum_{i=1}^N \| \Delta \bm{W} \bm{k}_i \|_2^2 = \sum_{i=1}^N \left\| 
        \begin{bmatrix} 
            \Delta \bm{w}_1 \cdot \bm{k}_i \\ 
            \vdots \\ 
            \Delta \bm{w}_p \cdot \bm{k}_i 
        \end{bmatrix} 
        \right\|_2^2 = \sum_{i=1}^N \sum_{j=1}^p (\Delta \bm{w}_j \cdot \bm{k}_i)^2
    \end{equation}

    \item \textbf{Change in Hyperdimensional Energy ($\Delta \bm{HE}$).} This quantity measures the change in the geometric arrangement of the neuron vectors due to the perturbation.
    \begin{equation}
        \Delta \bm{HE} \triangleq \sum_{i \neq j} \left( \frac{1}{\|\bm{w}_i - \bm{w}_j\|_2^2} - \frac{1}{\|(\bm{w}_i + \Delta \bm{w}_i) - (\bm{w}_j + \Delta \bm{w}_j)\|_2^2} \right)
    \end{equation}
\end{itemize}

\paragraph{Assumptions}
Our analysis relies on the following assumptions:

\newtheorem{assumption}{Assumption}
\begin{assumption}[Orthonormal Inputs]
\label{ass:basis_vectors}
The set of input vectors $\{\bm{k}_i\}_{i=1}^q$ is the standard orthonormal basis of $\mathbb{R}^q$.
\end{assumption}
Under this assumption, the output perturbation simplifies to the squared Frobenius norm of the perturbation matrix:
$$
    \Delta \bm{V} = \sum_{i=1}^q \sum_{j=1}^p (\Delta \bm{w}_j \cdot \bm{k}_i)^2 = \sum_{j=1}^p \sum_{i=1}^q (\Delta \bm{w}_{j,i})^2 = \sum_{j=1}^p \|\Delta \bm{w}_j\|_2^2 = \|\Delta \bm{W}\|_F^2
$$

\begin{assumption}[Small Perturbations]
\label{ass:small_perturbations}
The perturbation vectors $\Delta \bm{w}_i$ are sufficiently small in norm, which justifies the use of a first-order Taylor expansion to approximate the change in HE.
\end{assumption}

We can now state the relationship between the change in HE and the output perturbation energy.

\begin{theorem}[Upper Bound on HE Change]
\label{prop:he_bound}
Under Assumptions \ref{ass:basis_vectors} and \ref{ass:small_perturbations}, the absolute change in Hyperdimensional Energy, $|\Delta \bm{HE}|$, is upper-bounded by the square root of the output perturbation, $\sqrt{\Delta \bm{V}}$, as follows:
\begin{equation}
    |\Delta \bm{HE}| \le K \sqrt{\Delta \bm{V}}
\end{equation}
where $K$ is a constant determined by the geometry of the original weight matrix $\bm{W}$:
$$
    K = 4 \sqrt{ \sum_{k=1}^p \left( \sum_{j \neq k} \|\bm{w}_k - \bm{w}_j\|^{-3} \right)^2 }
$$
\end{theorem}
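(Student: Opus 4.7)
The plan is to linearize the pairwise energy via a first-order Taylor expansion (justified by Assumption 2), exploit the antisymmetry of the resulting sum to rewrite $\Delta \bm{HE}$ as an inner product against the perturbation matrix, and then finish with Cauchy--Schwarz together with the orthonormality assumption that collapses $\Delta \bm{V}$ into a Frobenius norm.

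First I would apply Taylor expansion to the scalar function $g(x) = \|x\|^{-2}$ around each $x_{ij} = \bm{w}_i - \bm{w}_j$, using $\nabla g(x) = -2x/\|x\|^4$. This gives
\begin{equation}
\|(\bm{w}_i + \Delta \bm{w}_i) - (\bm{w}_j + \Delta \bm{w}_j)\|^{-2}
\;\approx\; \|\bm{w}_i - \bm{w}_j\|^{-2} \;-\; \frac{2(\bm{w}_i - \bm{w}_j)}{\|\bm{w}_i - \bm{w}_j\|^4} \cdot (\Delta \bm{w}_i - \Delta \bm{w}_j),
\end{equation}
so that $\Delta \bm{HE} \approx \sum_{i\neq j} \frac{2(\bm{w}_i-\bm{w}_j)}{\|\bm{w}_i-\bm{w}_j\|^4}\cdot(\Delta \bm{w}_i - \Delta \bm{w}_j)$. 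Swapping the dummy indices $i\leftrightarrow j$ in the $\Delta \bm{w}_j$ piece and using the antisymmetry $\bm{w}_j-\bm{w}_i = -(\bm{w}_i-\bm{w}_j)$, both halves coincide and can be collected per neuron index, yielding
\begin{equation}
\Delta \bm{HE} \;\approx\; 4 \sum_{k=1}^p \bm{g}_k \cdot \Delta \bm{w}_k,
\qquad
\bm{g}_k \;\triangleq\; \sum_{j\neq k} \frac{\bm{w}_k - \bm{w}_j}{\|\bm{w}_k-\bm{w}_j\|^4}.
\end{equation}

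Next, I would apply Cauchy--Schwarz twice. Pointwise, $|\bm{g}_k\cdot\Delta \bm{w}_k|\le \|\bm{g}_k\|\|\Delta \bm{w}_k\|$, and summing across $k$ gives $|\Delta \bm{HE}| \le 4 \bigl(\sum_k \|\bm{g}_k\|^2\bigr)^{1/2}\bigl(\sum_k \|\Delta \bm{w}_k\|^2\bigr)^{1/2}$. The second factor is exactly $\|\Delta \bm{W}\|_F$, which by Assumption~\ref{ass:basis_vectors} equals $\sqrt{\Delta \bm{V}}$ (as already shown in the excerpt). For the first factor, the triangle inequality bounds each $\bm{g}_k$ by $\sum_{j\neq k} \|\bm{w}_k-\bm{w}_j\|/\|\bm{w}_k-\bm{w}_j\|^4 = \sum_{j\neq k} \|\bm{w}_k-\bm{w}_j\|^{-3}$, producing the promised constant $K$.

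The main obstacle I anticipate is cleanly handling the pairwise structure in step one: the Taylor expansion must be done under Assumption~\ref{ass:small_perturbations} with the higher-order remainder controlled (so that the $\approx$ can be upgraded to a true inequality up to negligible terms), and the index-swap/antisymmetry manipulation must carefully account for the factor of $2$ from the unordered pairs versus ordered pairs, so that the final constant comes out to $4$ rather than $2$ or $8$. Beyond that, the Cauchy--Schwarz and triangle-inequality steps are routine and I expect no difficulty in assembling them into the stated bound.
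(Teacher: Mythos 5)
Your proposal is correct and uses the same essential ingredients as the paper: first-order Taylor expansion of $\|\cdot\|^{-2}$, a Cauchy--Schwarz bound, a triangle-inequality bound, a second Cauchy--Schwarz over the neuron index, and the orthonormality assumption to identify $\sum_k\|\Delta\bm{w}_k\|^2$ with $\Delta\bm{V}$. The only organizational difference is that you collect the linearized terms into the per-neuron gradient vectors $\bm{g}_k$ via the antisymmetry index swap \emph{before} applying any inequalities (and bound $\|\bm{g}_k\|$ by the triangle inequality last), whereas the paper applies Cauchy--Schwarz and the triangle inequality to each $(i,j)$ term first and then re-indexes to collect terms by $k$; both paths produce the same factor of $4$ and the same constant $K$.
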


\begin{proof}
Let $\bm{p}_{ij} = \bm{w}_i - \bm{w}_j$ and $\Delta \bm{p}_{ij} = \Delta \bm{w}_i - \Delta \bm{w}_j$. The change in HE is $\Delta \bm{HE} = \sum_{i \neq j} (\|\bm{p}_{ij}\|^{-2} - \|\bm{p}_{ij} + \Delta \bm{p}_{ij}\|^{-2})$.
Using a first-order Taylor expansion for $f(\bm{x}) = \|\bm{x}\|^{-2}$ around $\bm{p}_{ij}$, we have:
$$
    \|\bm{p}_{ij} + \Delta \bm{p}_{ij}\|^{-2} \approx \|\bm{p}_{ij}\|^{-2} - 2 \|\bm{p}_{ij}\|^{-4} (\bm{p}_{ij} \cdot \Delta \bm{p}_{ij})
$$
Substituting this into the expression for $\Delta \bm{HE}$:
\begin{align*}
    \Delta \bm{HE} &\approx \sum_{i \neq j} \left( \|\bm{p}_{ij}\|^{-2} - \left( \|\bm{p}_{ij}\|^{-2} - 2 \|\bm{p}_{ij}\|^{-4} (\bm{p}_{ij} \cdot \Delta \bm{p}_{ij}) \right) \right) \\
    &= \sum_{i \neq j} 2 \|\bm{p}_{ij}\|^{-4} (\bm{p}_{ij} \cdot \Delta \bm{p}_{ij})
\end{align*}
We bound the absolute value of this approximation:
\begin{align*}
    |\Delta \bm{HE}| &\approx \left| \sum_{i \neq j} 2 \|\bm{w}_i - \bm{w}_j\|^{-4} ((\bm{w}_i - \bm{w}_j) \cdot (\Delta \bm{w}_i - \Delta \bm{w}_j)) \right| \\
    &\le \sum_{i \neq j} 2 \|\bm{w}_i - \bm{w}_j\|^{-3} \|\Delta \bm{w}_i - \Delta \bm{w}_j\| && \text{(by Cauchy-Schwarz)} \\
    &\le \sum_{i \neq j} 2 \|\bm{w}_i - \bm{w}_j\|^{-3} (\|\Delta \bm{w}_i\| + \|\Delta \bm{w}_j\|) && \text{(by Triangle Inequality)}
\end{align*}
By re-indexing the sum to group terms by $\|\Delta \bm{w}_k\|$:
\begin{align*}
    |\Delta \bm{HE}| &\le \sum_{k=1}^p \left( \sum_{j \neq k} 2 \|\bm{w}_k - \bm{w}_j\|^{-3} + \sum_{i \neq k} 2 \|\bm{w}_i - \bm{w}_k\|^{-3} \right) \|\Delta \bm{w}_k\| \\
    &= \sum_{k=1}^p \left( 4 \sum_{j \neq k} \|\bm{w}_k - \bm{w}_j\|^{-3} \right) \|\Delta \bm{w}_k\|
\end{align*}
Applying the Cauchy-Schwarz inequality to this final sum (viewed as a dot product in $\mathbb{R}^p$):
\begin{align*}
    |\Delta \bm{HE}| &\le \sqrt{ \sum_{k=1}^p \left( 4 \sum_{j \neq k} \|\bm{w}_k - \bm{w}_j\|^{-3} \right)^2 } \cdot \sqrt{ \sum_{k=1}^p \|\Delta \bm{w}_k\|^2 } \\
    &= K \cdot \sqrt{ \sum_{k=1}^p \|\Delta \bm{w}_k\|^2 }
\end{align*}
From Assumption \ref{ass:basis_vectors}, we know $\sum_{k=1}^p \|\Delta \bm{w}_k\|^2 = \Delta \bm{V}$. Therefore:
$$
    |\Delta \bm{HE}| \le K \sqrt{\Delta \bm{V}}
$$
\end{proof}

\subsection{Proof of Correlation Between Sparse Space Projection and Hyperspherical Energy}
\label{appendix:sparse-space-projection}

\begin{lemma}
For any vector $\bm{x} \in \mathbb{R}^d$ and a small perturbation $\Delta \bm{x} \in \mathbb{R}^d$, the first-order Taylor expansion of the function $g(\bm{x}) = \|\bm{x}\|_2^{-s}$ is:
\begin{equation}
g(\bm{x}+\Delta \bm{x}) \approx g(\bm{x}) + \nabla g(\bm{x})^T \Delta \bm{x} = \|\bm{x}\|_2^{-s} - s \|\bm{x}\|_2^{-s-2} x^T \Delta \bm{x}.
\end{equation}
\end{lemma}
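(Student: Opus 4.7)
The plan is to verify the claimed first-order Taylor expansion by directly computing the gradient of $g(\bm{x}) = \|\bm{x}\|_2^{-s}$ and substituting it into the standard multivariate Taylor formula $g(\bm{x} + \Delta \bm{x}) \approx g(\bm{x}) + \nabla g(\bm{x})^\top \Delta \bm{x}$. The only computational content is the gradient calculation, so the bulk of the argument reduces to a chain-rule exercise.

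Concretely, I would first rewrite $g(\bm{x}) = (\bm{x}^\top \bm{x})^{-s/2}$ so that it is exhibited as a composition of the smooth power map $t \mapsto t^{-s/2}$ with the squared-norm map $\bm{x} \mapsto \bm{x}^\top \bm{x}$. Using the elementary identity $\nabla (\bm{x}^\top \bm{x}) = 2\bm{x}$ and the chain rule, I would compute
\begin{equation}
\nabla g(\bm{x}) \;=\; -\tfrac{s}{2} \, (\bm{x}^\top \bm{x})^{-s/2 - 1} \cdot 2\bm{x} \;=\; -s \, \|\bm{x}\|_2^{-s-2} \, \bm{x}.
\end{equation}
Plugging this into the linear Taylor expansion then immediately yields the stated identity $g(\bm{x} + \Delta \bm{x}) \approx \|\bm{x}\|_2^{-s} - s \, \|\bm{x}\|_2^{-s-2} \, \bm{x}^\top \Delta \bm{x}$.

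To justify the approximation symbol, I would invoke Taylor's theorem with remainder: $g$ is infinitely differentiable on $\mathbb{R}^d \setminus \{\bm{0}\}$, so for any fixed $\bm{x} \ne \bm{0}$ and sufficiently small $\|\Delta \bm{x}\|$, the error incurred by truncating at first order is $O(\|\Delta \bm{x}\|^2)$. There is no substantive obstacle in this proof; the only point worth flagging is that the expansion is valid only away from the origin. In the context in which the lemma is applied (namely to differences $\bm{w}_i - \bm{w}_j$ between distinct neuron weights), this nondegeneracy is automatic, and the ``small perturbation'' hypothesis of Assumption~\ref{ass:small_perturbations} ensures that $\|\Delta \bm{x}\| \ll \|\bm{x}\|$ so the remainder is negligible.
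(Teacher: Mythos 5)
Your proposal is correct and follows essentially the same route as the paper: compute $\nabla g$ by the chain rule applied to $(\bm{x}^\top\bm{x})^{-s/2}$ and substitute into the first-order Taylor formula. The added remark that $g$ is smooth only on $\mathbb{R}^d\setminus\{\bm{0}\}$ and that the truncation error is $O(\|\Delta\bm{x}\|^2)$ is a minor but welcome extra precision not spelled out in the paper.
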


We have
\begin{equation}
g(\bm{x}) = \left(\sum_k \bm{x_k^2}\right)^{-s/2}.
\end{equation}
The partial derivative with respect to $x_l$ is:
\begin{equation}
\frac{\partial g}{\partial \bm{x}_l} = -\frac{s}{2} \left(\sum_k \bm{x}_k^2\right)^{-s/2-1} \cdot (2\bm{x}_l) = -s \|\bm{x}\|_2^{-s-2} \bm{x}_l.
\end{equation}
Thus, the gradient vector is
\begin{equation}
\nabla g(\bm{x}) = -s \|\bm{x}\|_2^{-s-2} \bm{x}.
\end{equation}
Substituting into the first-order Taylor expansion
\begin{equation}
g(\bm{x}+\Delta \bm{x}) \approx g(\bm{x}) + \nabla g(\bm{x})^T \Delta \bm{x}
\end{equation}
completes the proof.

\begin{theorem}
The magnitude of $|\Delta \bm{HE}|$ is bounded above by a constant-weighted sum of all neuron perturbation norms:
\begin{equation}
|\Delta \bm{HE}| \le \sum_{k=1}^p C_k \|\Delta \bm{w}_k\|_2,
\end{equation}
where
\begin{equation}
C_k = s \sum_{j \neq k} \|\bm{w}_k - \bm{w}_j\|_2^{-s-1}
\end{equation}
is a constant that depends only on the original weight matrix $\bm{W}$.
\end{theorem}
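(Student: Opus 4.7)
The plan is to mirror the derivation used for Proposition~\ref{prop:he_bound} in Appendix~\ref{appendix:Correlation}, only now generalized from the specific case $s=2$ to arbitrary $s$; the lemma stated just above supplies the one analytic fact that differs between the two cases, namely the first-order Taylor expansion of $\|\bm{x}\|_2^{-s}$. Everything downstream is the same combination of Cauchy--Schwarz, the triangle inequality, and a re-indexing step.

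Concretely, I would set $\bm{p}_{ij} := \bm{w}_i - \bm{w}_j$ and $\Delta \bm{p}_{ij} := \Delta \bm{w}_i - \Delta \bm{w}_j$, and then apply the lemma with $\bm{x} = \bm{p}_{ij}$ and $\Delta \bm{x} = \Delta \bm{p}_{ij}$ to each summand in the definition of $\Delta \bm{HE}$. The zeroth-order terms $\|\bm{p}_{ij}\|_2^{-s}$ cancel against the pre-edit HE, leaving, to first order in the perturbation, a sum of inner-product terms of the form $s\,\|\bm{p}_{ij}\|_2^{-s-2}\,\bm{p}_{ij}^T \Delta \bm{p}_{ij}$. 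Passing to absolute values and applying Cauchy--Schwarz to each inner product absorbs one factor of $\|\bm{p}_{ij}\|_2$, yielding the intermediate bound $s\,\|\bm{w}_i - \bm{w}_j\|_2^{-s-1}\,\|\Delta \bm{w}_i - \Delta \bm{w}_j\|_2$, after which the triangle inequality replaces $\|\Delta \bm{w}_i - \Delta \bm{w}_j\|_2$ by $\|\Delta \bm{w}_i\|_2 + \|\Delta \bm{w}_j\|_2$.

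The final step is a re-indexing that collects every term multiplying a given $\|\Delta \bm{w}_k\|_2$. Under the unordered-pair convention for the HE sum, each index $k$ receives one contribution $s\,\|\bm{w}_k - \bm{w}_j\|_2^{-s-1}$ from each neighbor $j \neq k$, producing exactly $C_k = s \sum_{j \neq k} \|\bm{w}_k - \bm{w}_j\|_2^{-s-1}$ as claimed; this is manifestly a function of $\bm{W}$ alone, so the theorem follows.

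The main obstacle, such as it is, is purely bookkeeping rather than analytic. First, one must be careful about which summation convention for HE is in force here -- the unordered-pair sum used in the statement, rather than the doubled ordered-pair sum of Eqn.~\ref{eq:full_hse} that produced the factor of $4$ in Proposition~\ref{prop:he_bound} for $s=2$; the two conventions differ by exactly the factor that reconciles $C_k = s \sum \|\cdot\|^{-s-1}$ with $K$ specialized to $s = 2$. Second, one must invoke Assumption~\ref{ass:small_perturbations} explicitly when promoting the first-order $\approx$ to a clean $\le$, arguing that the Taylor remainder is of higher order and therefore absorbed at the asserted scale. No new analytic tool beyond the supplied lemma is required.
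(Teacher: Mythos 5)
Your proposal reproduces the paper's own argument essentially step for step: the Taylor/mean-value expansion of $\|\bm{x}\|_2^{-s}$ from the lemma, Cauchy--Schwarz to obtain $s\,\|\bm{p}_{ij}\|_2^{-s-1}\|\Delta\bm{p}_{ij}\|_2$, the triangle inequality, and the re-indexing over unordered pairs that yields $C_k = s\sum_{j\neq k}\|\bm{w}_k-\bm{w}_j\|_2^{-s-1}$. Your side remarks about the ordered-versus-unordered summation convention (reconciling $C_k$ at $s=2$ with the factor of~$4$ in Proposition~\ref{prop:he_bound}) and about invoking the small-perturbation assumption to justify the first-order truncation are both accurate and match what the paper does implicitly via the mean value theorem with $\xi_{ij}\approx\bm{p}_{ij}$.
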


Consider each term in $\Delta \bm{HE}$. Let
\begin{equation}
\bm{p}_{ij} = \bm{w}_i - \bm{w}_j, \quad \Delta \bm{p}_{ij} = (\bm{w}'_i - \bm{w}'_j) - \bm{p}_{ij} = \Delta \bm{w}_i - \Delta \bm{w}_j.
\end{equation}
Then
\begin{equation}
\Delta \bm{HE} = \sum_{i<j} \Big( \|\bm{p}_{ij}\|_2^{-s} - \|\bm{p}_{ij} + \Delta \bm{p}_{ij}\|_2^{-s} \Big).
\end{equation}
By Lemma~1:
\begin{equation}
\|\bm{p}_{ij} + \Delta \bm{p}_{ij}\|_2^{-s} \approx \|\bm{p}_{ij}\|_2^{-s} - s \|\bm{p}_{ij}\|_2^{-s-2} \bm{p}_{ij}^T \Delta \bm{p}_{ij}.
\end{equation}
Substituting into the expression for $\Delta HE$:
\begin{equation}
\Delta \bm{HE} \approx \sum_{i<j} s \|\bm{p}_{ij}\|_2^{-s-2} \bm{p}_{ij}^T \Delta \bm{p}_{ij}.
\end{equation}

To obtain a rigorous bound, apply the mean value theorem. For $g(\bm{x})=\|\bm{x}\|_2^{-s}$, there exists $\xi_{ij}$ between $\bm{p}_{ij}$ and $\bm{p}_{ij}+\Delta \bm{p}_{ij}$ such that
\begin{equation}
g(\bm{p}_{ij} + \Delta \bm{p}_{ij}) - g(\bm{p}_{ij}) = \nabla g(\xi_{ij})^T \Delta \bm{p}_{ij}.
\end{equation}
Taking absolute values and applying the Cauchy–Schwarz inequality:
\begin{equation}
|g(\bm{p}_{ij} + \Delta \bm{p}_{ij}) - g(\bm{p}_{ij})| \le \|\nabla g(\xi_{ij})\|_2 \cdot \|\Delta \bm{p}_{ij}\|_2.
\end{equation}
Since
\begin{equation}
\nabla g(\bm{x}) = -s \|\bm{x}\|_2^{-s-2} \bm{x},
\end{equation}
its norm is
\begin{equation}
\|\nabla g(\bm{x})\|_2 = s \|\bm{x}\|_2^{-s-1}.
\end{equation}
Assuming small perturbations, $\xi_{ij} \approx \bm{p}_{ij}$, giving
\begin{equation}
|g(\bm{p}_{ij} + \Delta \bm{p}_{ij}) - g(\bm{p}_{ij})| \approx s \|\bm{p}_{ij}\|_2^{-s-1} \|\Delta \bm{p}_{ij}\|_2.
\end{equation}

Thus,
\begin{equation}
|\Delta HE| \le \sum_{i<j} s \|\bm{w}_i - \bm{w}_j\|_2^{-s-1} \|\Delta \bm{p}_{ij}\|_2.
\end{equation}
Applying the triangle inequality:
\begin{equation}
\|\Delta \bm{p}_{ij}\|_2 = \|\Delta \bm{w}_i - \Delta \bm{w}_j\|_2 \le \|\Delta \bm{w}_i\|_2 + \|\Delta \bm{w}_j\|_2.
\end{equation}
Therefore,
\begin{equation}
|\Delta \bm{HE}| \le \sum_{i<j} s \|\bm{w}_i - \bm{w}_j\|_2^{-s-1} (\|\Delta \bm{w}_i\|_2 + \|\Delta \bm{w}_j\|_2).
\end{equation}

Rearranging terms with respect to each $\|\Delta \bm{w}_k\|_2$, we obtain:
\begin{equation}
|\Delta \bm{HE}| \le \sum_{k=1}^p \|\Delta \bm{w}_k\|_2 \Big( s \sum_{j \neq k} \|\bm{w}_k - \bm{w}_j\|_2^{-s-1} \Big).
\end{equation}
Defining
\begin{equation}
C_k = s \sum_{j \neq k} \|\bm{w}_k - \bm{w}_j\|_2^{-s-1},
\end{equation}
we conclude
\begin{equation}
|\Delta \bm{HE}| \le \sum_{k=1}^p C_k \|\Delta \bm{w}_k\|_2.
\end{equation}

\noindent \textbf{Conclusion of Theorem 1.}  
The magnitude of $|\Delta \bm{HE}|$ is constrained by the weighted sum of neuron perturbation norms. To reduce $|\Delta \bm{HE}|$, an effective approach is to minimize each $\|\Delta \bm{w}_k\|_2$.

\begin{theorem}
The SPHERE projection operation reduces (or preserves) the $\ell_2$-norm of perturbation vectors:
\begin{equation}
\|\Delta \bm{w}_{i,\text{SPHERE}}\|_2 \le \|\Delta \bm{w}_i\|_2.
\end{equation}
\end{theorem}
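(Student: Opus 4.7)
The plan is to exhibit the SPHERE projection as a symmetric linear operator with all eigenvalues in $[0,1]$, so that it is a contraction in $\ell_2$, and then read off the row-wise inequality.

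First I would unpack the projection from Eqn.~\ref{eq:sphere_constrain}: the $i$-th row-perturbation becomes $\Delta \bm{w}_{i,\text{SPHERE}} = \Delta \bm{w}_i \bm{P}_\perp$, equivalently $\bm{P}_\perp^\top \Delta \bm{w}_i$ as a column vector, since $\bm{P}_\perp = I - \alpha \bm{U}\bm{U}^\top$ is symmetric. Because the columns of $\bm{U}$ are eigenvectors of the symmetric PSD matrix $\tfrac{1}{n}\bm{W}^\top\bm{W}$, they can be taken to be orthonormal, so $\bm{U}^\top\bm{U} = I_r$. This immediately makes $\bm{U}\bm{U}^\top$ the orthogonal projector onto the principal subspace, and the spectrum of $\bm{P}_\perp$ consists of the eigenvalue $1-\alpha$ with multiplicity $r$ (on $\mathrm{range}(\bm{U})$) and the eigenvalue $1$ with multiplicity $d-r$ (on $\mathrm{range}(\bm{U})^\perp$). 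For the regime $0 < \alpha \le 1$ used in \METHODNAME{}, every eigenvalue lies in $[0,1]$, so $\|\bm{P}_\perp\|_2 \le 1$.

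Next I would give the quantitative comparison by orthogonal decomposition. Writing
\begin{equation}
\Delta \bm{w}_i \;=\; \bm{U}\bm{U}^\top \Delta \bm{w}_i \;+\; (I-\bm{U}\bm{U}^\top)\Delta \bm{w}_i \;=\; \Delta \bm{w}_i^{\parallel} \;+\; \Delta \bm{w}_i^{\perp},
\end{equation}
the two summands are orthogonal, so
\begin{equation}
\|\Delta \bm{w}_i\|_2^2 \;=\; \|\Delta \bm{w}_i^{\parallel}\|_2^2 + \|\Delta \bm{w}_i^{\perp}\|_2^2.
\end{equation}
Applying $\bm{P}_\perp$ leaves $\Delta \bm{w}_i^{\perp}$ untouched and scales $\Delta \bm{w}_i^{\parallel}$ by $(1-\alpha)$, giving
\begin{equation}
\|\Delta \bm{w}_{i,\text{SPHERE}}\|_2^2 \;=\; (1-\alpha)^2\,\|\Delta \bm{w}_i^{\parallel}\|_2^2 + \|\Delta \bm{w}_i^{\perp}\|_2^2 \;\le\; \|\Delta \bm{w}_i\|_2^2,
\end{equation}
with equality iff $\alpha=0$ or $\Delta \bm{w}_i^{\parallel}=0$. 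Taking square roots delivers the stated bound.

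The only thing to be careful about is a convention point rather than a real obstacle: the row-versus-column interpretation of ``neuron'' (Eqn.~\ref{eq:full_hse} treats rows as neurons while $\bm{U}$ arises from $\bm{W}^\top\bm{W}$ acting on columns), and the compatibility requirement that $\bm{P}_\perp$ acts on the correct axis. I would state this explicitly at the start, note that symmetry of $\bm{P}_\perp$ makes left- and right-multiplication equivalent in norm for each row, and confine the argument to the admissible range $0 < \alpha \le 1$ declared in Section~\ref{sec:method}. Combined with Theorem~1, this implies $|\Delta \bm{HE}|$ is also contracted, which is precisely the regularization effect \METHODNAME{} is designed to produce.
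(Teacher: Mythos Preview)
Your proof is correct and, at its core, uses the same orthogonal decomposition as the paper: split $\Delta\bm{w}_i$ into its component in $\mathrm{range}(\bm{U})$ and its orthogonal complement, then observe that applying $\bm{P}_\perp$ can only shrink the first piece. The paper carries this out algebraically by writing $\|\Delta\bm{w}_i\bm{P}_\perp\|_2^2 = \Delta\bm{w}_i\bm{P}_\perp\bm{P}_\perp^\top\Delta\bm{w}_i^\top$, invoking symmetry and \emph{idempotence} of $\bm{P}_\perp$, and substituting $\bm{P}_\perp = I-\bm{U}\bm{U}^\top$ to get $\|\Delta\bm{w}_i\|_2^2 - \|\Delta\bm{w}_i\bm{U}\|_2^2$.

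Where you differ usefully is in handling the suppression strength $\alpha$. The paper's argument appeals to idempotence ($\bm{P}_\perp^2=\bm{P}_\perp$), which holds only for the hard projection $\alpha=1$; for the soft projection $0<\alpha<1$ actually used in the experiments (Appendix~\ref{appendix:eta-and-alpha}), $\bm{P}_\perp$ is not a projector and that step fails. Your spectral route---eigenvalues $\{1-\alpha,1\}$ of a symmetric operator, hence a contraction whenever $\alpha\in(0,1]$---covers the general case cleanly and yields the sharper identity $\|\Delta\bm{w}_{i,\text{SPHERE}}\|_2^2 = (1-\alpha)^2\|\Delta\bm{w}_i^{\parallel}\|_2^2 + \|\Delta\bm{w}_i^{\perp}\|_2^2$, with the equality characterization ($\alpha=0$ or $\Delta\bm{w}_i^{\parallel}=0$) matching the paper's only in the limit $\alpha=1$. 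Your remark on the row/column convention is also a fair caveat that the paper does not address.
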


Compute the squared $\ell_2$-norm:
\begin{equation}
\|\Delta \bm{w}_{i,\text{SPHERE}}\|_2^2 = \|\Delta \bm{w}_i \bm{P}_{\perp}\|_2^2 = (\Delta \bm{w}_i \bm{P}_{\perp})(\Delta \bm{w}_i \bm{P}_{\perp})^T.
\end{equation}
Using $(AB)^T = B^T A^T$:
\begin{equation}
\|\Delta \bm{w}_{i,\text{SPHERE}}\|_2^2 = \Delta \bm{w}_i \bm{P}_{\perp} \bm{P}_{\perp}^T \Delta \bm{w}_i^T.
\end{equation}

The projection matrix $\bm{P}_{\perp}$ satisfies two key properties:
\begin{itemize}[leftmargin=*, itemsep=0pt, parsep=0pt]
    \item \textbf{Symmetry:} $\bm{P}_{\perp}^T = \bm{P}_{\perp}$.
    \item \textbf{Idempotence:} $\bm{P}_{\perp}^2 = \bm{P}_{\perp}$.
\end{itemize}

Thus,
\begin{equation}
\|\Delta \bm{w}_{i,\text{SPHERE}}\|_2^2 = \Delta \bm{w}_i \bm{P}_{\perp}^2 \Delta \bm{w}_i^T = \Delta \bm{w}_i \bm{P}_{\perp} \Delta \bm{w}_i^T.
\end{equation}

Substituting $\bm{P}_{\perp} = I - \bm{U}\bm{U}^T$:
\begin{equation}
\|\Delta \bm{w}_{i,\text{SPHERE}}\|_2^2 = \Delta \bm{w}_i (I - \bm{U}\bm{U}^T) \Delta \bm{w}_i^T = \|\Delta \bm{w}_i\|_2^2 - \|\Delta \bm{w}_i \bm{U}\|_2^2.
\end{equation}

Since
\begin{equation}
\|\Delta \bm{w}_i \bm{U}\|_2^2 \ge 0,
\end{equation}
we conclude
\begin{equation}
\|\Delta \bm{w}_{i,\text{SPHERE}}\|_2^2 \le \|\Delta \bm{w}_i\|_2^2.
\end{equation}
Taking square roots:
\begin{equation}
\|\Delta \bm{w}_{i,\text{SPHERE}}\|_2 \le \|\Delta \bm{w}_i\|_2.
\end{equation}

Equality holds iff
\begin{equation}
\Delta \bm{w}_i \bm{U} = 0,
\end{equation}
i.e., $\Delta \bm{w}_i$ is orthogonal to all basis vectors of the principal subspace $\bm{U}$. In this case, $\Delta \bm{w}_i$ already lies in the sparse subspace.

\section{Experimental Setup} \label{appendix-setup}
\subsection{Datasets}
\label{appendix:datasets}

Here, we provide a detailed introduction to the datasets used in this paper:

\begin{itemize}[leftmargin=*]
    \item \textbf{Counterfact} \citep{DBLP:conf/nips/MengBAB22} is a more challenging dataset that contrasts counterfactual with factual statements, initially scoring lower for Counterfact. It constructs out-of-scope data by replacing the subject entity with approximate entities sharing the same predicate. The Counterfact dataset has similar metrics to ZsRE for evaluating efficacy, generalization, and specificity. Additionally, Counterfact includes multiple generation prompts with the same meaning as the original prompt to test the quality of generated text, specifically focusing on fluency and consistency.

    \item \textbf{ZsRE} \citep{DBLP:conf/conll/LevySCZ17} is a question answering (QA) dataset that uses questions generated through back-translation as equivalent neighbors. Following previous work, natural questions are used as out-of-scope data to evaluate locality. Each sample in ZsRE includes a subject string and answers as the editing targets to assess editing success, along with the rephrased question for generalization evaluation and the locality question for evaluating specificity.
\end{itemize}

\subsection{Evaluation Metrics} \label{appendix-metrics}
Now we introduce the evaluation metrics for the ZsRE and Counterfact datasets, respectively.

\subsubsection{Metrics for ZsRE}

Following the previous work \citep{DBLP:conf/iclr/MitchellLBFM22,DBLP:conf/nips/MengBAB22,DBLP:conf/iclr/MengSABB23}, this section defines each ZsRE metric given a LLM $f_\theta$, a knowledge fact prompt $(s_i, r_i)$, an edited target output $o_i$, and the model’s original output $o^c_i$:

\begin{itemize}[leftmargin=*]
    \item \textbf{Efficacy:} Efficacy is calculated as the average top-1 accuracy on the edit samples:
    \begin{equation}
        \mathbb{E}_i \Big\{ o_i = \arg\max_o \mathbb{P}_{f_\theta}(o \mid (s_i, r_i)) \Big\}.
        \label{eq:zre_efficacy}
    \end{equation}

    \item \textbf{Generalization:} Generalization measures the model’s performance on equivalent prompts of $(s_i, r_i)$, such as rephrased statements $N((s_i, r_i))$. This is evaluated by the average top-1 accuracy on these $N((s_i, r_i))$:
    \begin{equation}
        \mathbb{E}_i \Big\{ o_i = \arg\max_o \mathbb{P}_{f_\theta}(o \mid N((s_i, r_i))) \Big\}.
        \label{eq:zre_generalization}
    \end{equation}

    \item \textbf{Specificity:} Specificity ensures that the editing does not affect samples unrelated to the edit cases $O(s_i, r_i)$. This is evaluated by the top-1 accuracy of predictions that remain unchanged:
    \begin{equation}
        \mathbb{E}_i \Big\{ o^c_i = \arg\max_o \mathbb{P}_{f_\theta}(o \mid O((s_i, r_i))) \Big\}.
        \label{eq:zre_specificity}
    \end{equation}
\end{itemize}

\subsubsection{Metrics for Counterfact}

Following previous work~\citep{DBLP:conf/nips/MengBAB22,DBLP:conf/iclr/MengSABB23}, this section defines the Counterfact metrics given a language model $f_\theta$, a knowledge fact prompt $(s_i, r_i)$, an edited target output $o_i$, and the model’s original output $o^c_i$. However, for rigorous evaluation, we adopt the \textbf{average top-1 accuracy} as the metric for this dataset, which is used to assess Efficacy, Generalization, and Specificity.

\begin{itemize}[leftmargin=*]
    \item \textbf{Efficacy (efficacy success):} Efficacy is calculated as the average top-1 accuracy on the edit samples:
    \begin{equation}
        \mathbb{E}_i \Big\{ o_i = \arg\max_o \mathbb{P}_{f_\theta}(o \mid (s_i, r_i)) \Big\}.
        \label{eq:zre_efficacy_}
    \end{equation}

    \item \textbf{Generalization (paraphrase success):} Generalization measures the model’s performance on equivalent prompts of $(s_i, r_i)$, such as rephrased statements $N((s_i, r_i))$. This is evaluated by the average top-1 accuracy on these $N((s_i, r_i))$:
    \begin{equation}
        \mathbb{E}_i \Big\{ o_i = \arg\max_o \mathbb{P}_{f_\theta}(o \mid N((s_i, r_i))) \Big\}.
        \label{eq:zre_generalization_}
    \end{equation}

    \item \textbf{Specificity (neighborhood success):} Specificity ensures that the editing does not affect samples unrelated to the edit cases $O(s_i, r_i)$. This is evaluated by the top-1 accuracy of predictions that remain unchanged:
    \begin{equation}
        \mathbb{E}_i \Big\{ o^c_i = \arg\max_o \mathbb{P}_{f_\theta}(o \mid O((s_i, r_i))) \Big\}.
        \label{eq:zre_specificity_}
    \end{equation}
    
    \item \textbf{Fluency (generation entropy):} Measure for excessive repetition in model outputs. It uses the entropy of n-gram distributions:
    \begin{equation}
        -\frac{2}{3}\sum_k g_2(k)\log_2 g_2(k) + \frac{4}{3}\sum_k g_3(k)\log_2 g_3(k),
        \label{eq:cf_fluency}
    \end{equation}
    where $g_n(\cdot)$ is the n-gram frequency distribution.

    \item \textbf{Consistency (reference score):} The consistency of the model’s outputs is evaluated by giving the model $f_\theta$ a subject $s$ and computing the cosine similarity between the TF-IDF vectors of the model-generated text and a reference Wikipedia text about $o$.
\end{itemize}

\subsection{Baselines}
\label{appendix:baselines}

We introduce the five baseline models employed in this study. 
\textbf{For the hyperparameter settings of the baseline methods, except those mentioned in Appendix~\ref{appendix:implementation}, we follow the original code provided in the respective papers for reproduction}.

\begin{itemize}[leftmargin=*]
    \item \textbf{MEMIT} is a scalable multi-layer editing algorithm designed to insert new factual memories into transformer-based language models. 
    Extending ROME, MEMIT targets transformer module weights that mediate factual recall, allowing efficient updates of thousands of associations with improved scalability.

    \item \textbf{PRUNE} preserves the general abilities of LLMs during sequential editing by constraining numerical sensitivity. 
    It addresses performance degradation from repeated edits by applying condition number restraints to the edited matrix, thereby limiting harmful perturbations to stored knowledge and ensuring edits can be made without compromising overall model capability.

    \item \textbf{RECT} mitigates unintended side effects of model editing on general reasoning and question answering. 
    It regularizes weight updates during editing to prevent excessive alterations that cause overfitting, thereby maintaining strong editing performance while preserving the model’s broader generalization abilities.

    \item \textbf{AlphaEdit} introduces a sequential editing framework that leverages null-space projection to constrain parameter updates. 
    By projecting edits into the null space of unrelated knowledge, AlphaEdit reduces interference with pre-existing capabilities and improves stability under sequential edits. 
    This design enables efficient large-scale editing with enhanced robustness and generalization compared to prior approaches.
\end{itemize}

\subsection{Implementation Details}
\label{appendix:implementation}
Our implementation of \METHODNAME{} with Llama3 (8B) and Qwen2.5 (7B) follows the configurations outlined in MEMIT \citep{DBLP:conf/iclr/MengSABB23}. Specifically, we edit critical layers [4, 5, 6, 7, 8], with the hyperparameters $\eta$ set to 0.5 and $\alpha$ set to 0.5 (see Appendix~\ref{appendix:eta-and-alpha}). During hidden representation updates of the critical layer, we perform 25 optimization steps.
The learning rate were set to 0.1 for Llama3 (8B) and 0.5 for Qwen2.5 (7B), respectively.
All experiments are conducted on eight A800 (80GB) GPUs. The LLMs are loaded using HuggingFace Transformers \citep{DBLP:journals/corr/abs-1910-03771}.

\subsubsection{Cumulative Ratio \texorpdfstring{$\eta$}{eta} and Suppression Strength \texorpdfstring{$\alpha$}{alpha}}
\label{appendix:eta-and-alpha}
We next provide details of two important hyperparameters in our sparse space projection: the cumulative ratio $\eta$ and the suppression strength $\alpha$, together with the values used in our experiments.

\paragraph{Cumulative Ratio $\eta$.} 
We define $\eta$ as the cumulative ratio used to select the top $r$ eigenvectors in Eqn.~\ref{eq:principle_copy}, corresponding to the $r$ principal directions on the unit hypersphere. Specifically, $\eta$ controls the selection of eigenvectors based on their eigenvalues $\lambda$, such that
\[
\sum_{i=d-r+1}^{d} \lambda_i \;\geq\; \eta \cdot \sum_{i=1}^{d} \lambda_i .
\]
In practice, we set $\eta=0.5$ for all experiments, meaning that only the top 50\% of the principal directions of the edited weights are suppressed.
\begin{equation}
\bm{U} = [v_{d-r+1}, \dots, v_d] \in \mathbb{R}^{d \times r}.
\label{eq:principle_copy}
\end{equation}

\paragraph{Suppression Strength $\alpha$.} 
We define $\alpha$ as the suppression strength in the projection, which controls the extent to which perturbation components along the principal directions $\bm{U}$ are removed, as shown in Eqn.~\ref{eq:suppression-strength-copy}. 
In practice, we set $\alpha=0.5$ for projections on AlphaEdit, while using $\alpha=0.8$ for all other methods, following the empirical findings reported in Section~\ref{sec:Empirical} (Observation~2).
\begin{equation}
\bm{P}_\perp = I - \alpha \bm{U} \bm{U}^\top \in \mathbb{R}^{d \times d}.
\label{eq:suppression-strength-copy}
\end{equation}

\subsection{Adding Projection in Baseline Methods}
\label{appendix:projection-implementation}
We then describe the details of incorporating our projection into baseline editing methods. For illustration, we take MEMIT as an example, though the same procedure is applied to all other methods ($i.e.$ FT, PRUNE, RECT, and AlphaEdit).

As introduced in Section~\ref{sec:model-editing}, the editing objective can be written as:
\begin{equation}
\Delta \bm{W} = 
\arg\min_{\Delta \hat{\bm{W}}} 
\left( 
    \left\| (\bm{W} + \Delta \hat{\bm{W}})\bm{K}_1 - \bm{V}_1 \right\|^2
    + \left\| (\bm{W} + \Delta \hat{\bm{W}})\bm{K}_0 - \bm{V}_0 \right\|^2
\right).
\label{eq:edit_obj_preserve_copy}
\end{equation}

Then, the solution for Eqn.~\ref{eq:edit_obj_preserve_copy} can be expressed as~\citep{DBLP:conf/iclr/FangJWMSW0C25}:
\begin{equation}
\Delta \bm{W}_{\text{MEMIT}} =
\bm{R}\bm{K}_{1}^{T} 
\left( 
    \bm{K}_{p}\bm{K}_{p}^{T} 
    + \bm{K}_{1}\bm{K}_{1}^{T} 
    + \bm{K}_{0}\bm{K}_{0}^{T} 
\right)^{-1},
\end{equation}
where $\bm{K}_{p}$ denotes the key and value matrices of previously updated knowledge, analogous to $\bm{K}_1$ and $\bm{V}_1$, and $\bm{R} = \bm{V}_1 - \bm{W}\bm{K}_1$.

In our sparse-space projection framework, the projection matrix does not directly participate in solving the above optimization problem. Instead, we first obtain $\Delta \bm{W}$ from the normal equation (or other solvers), and then apply the projection afterwards, as follows:
\begin{equation}
\hat{\bm{W}} = \bm{W} + \Delta \bm{W}\bm{P}_\perp.
\label{eq:projection_add_copy}
\end{equation}
This design makes the projection step modular and easily generalizable across different editing algorithms.

\section{More Experimental Results}

\begin{figure}[t]
    \centering
    \includegraphics[width=1\textwidth]{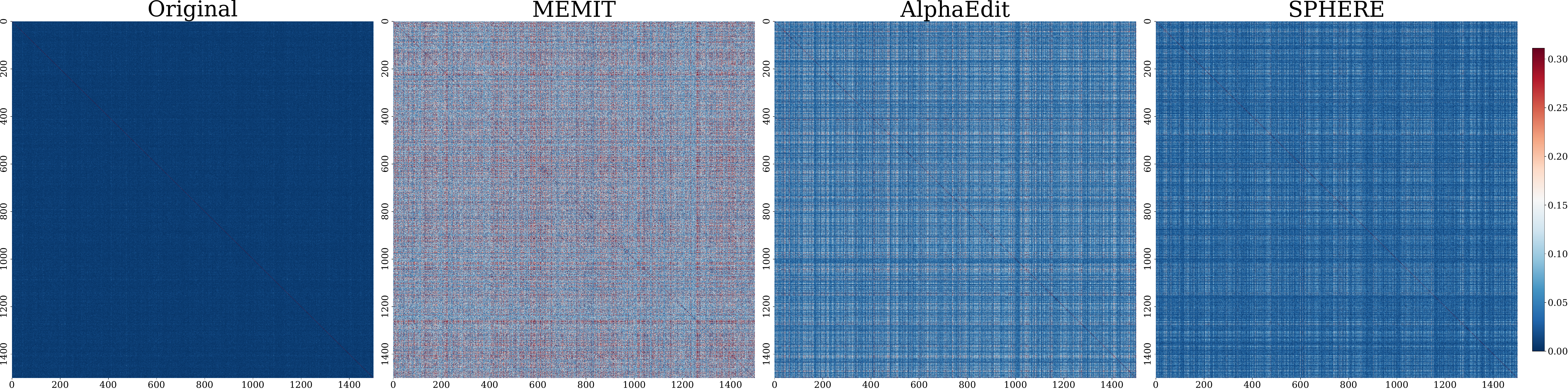} 
    \caption{
    Cosine similarity between neurons in updated weight matrix after 5,000 edits on Qwen2.5. Darker colors indicate lower similarity, reflecting better hyperspherical and orthogonal uniformity. 
    \METHODNAME{} effectively preserve the weight structure, demonstrating the most stable hyperspherical uniformity.
    }
    \label{fig:Sec_5_heatmap_qwen}
\end{figure}

\begin{figure}[t]
    \centering
    \includegraphics[width=1\textwidth]{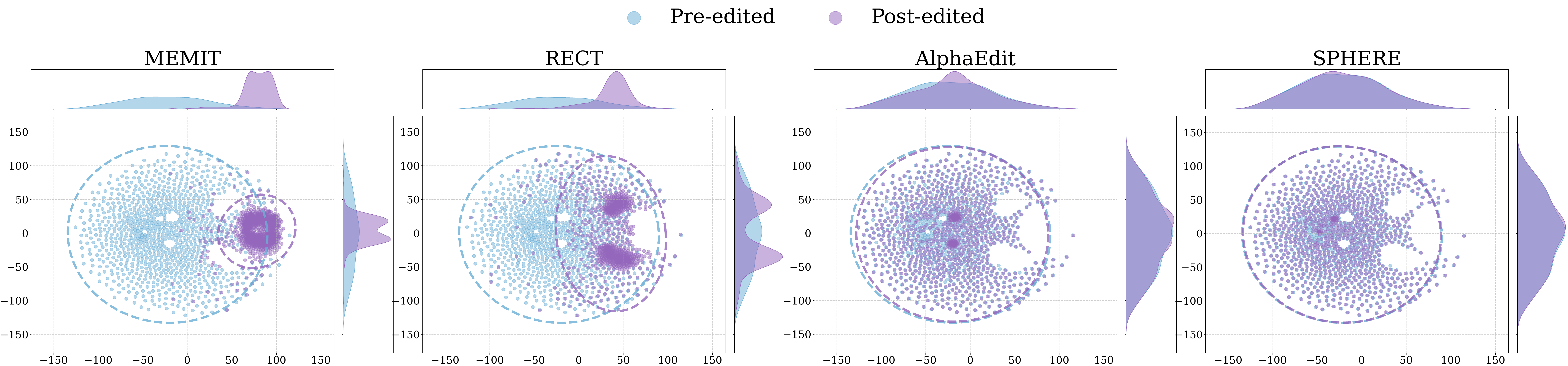} 
    \caption{The distribution of weight neurons of pre-edited and post-edited Qwen2.5 after 5,000 edits using dimensionality reduction across mainstream sequential editing methods. The top and right curve graphs display the marginal distributions for two reduced dimensions, where \METHODNAME{} consistently exhibits minimal shift. 
    }
    \label{fig:Sec_5_pca_qwen}
\end{figure}

\subsection{Analysis of Edited Weights}
\label{appendix:hidden-representation-qwen}
As illustrated in Figure~\ref{fig:Sec_5_heatmap_qwen} and~\ref{fig:Sec_5_pca_qwen}, \METHODNAME{} effectively preserves hyperspherical uniformity after editing on Qwen2.5 (7B) as well, as the cosine similarity among weight neurons remains close to the original distribution, thereby avoiding directional collapse and maintaining its hyperspherical uniformity. Moreover, the pre- and post-edited weights exhibit more similar distributions, indicating that \METHODNAME{} prevents significant shifts in hidden representations and maintains consistency. In contrast, all other baselines induce clear angular concentration in neuron directions, causing neurons to cluster in limited angular regions and significantly reducing hyperspherical directional diversity.

\begin{figure}[!hbt]
    \centering
    \includegraphics[width=1\textwidth]{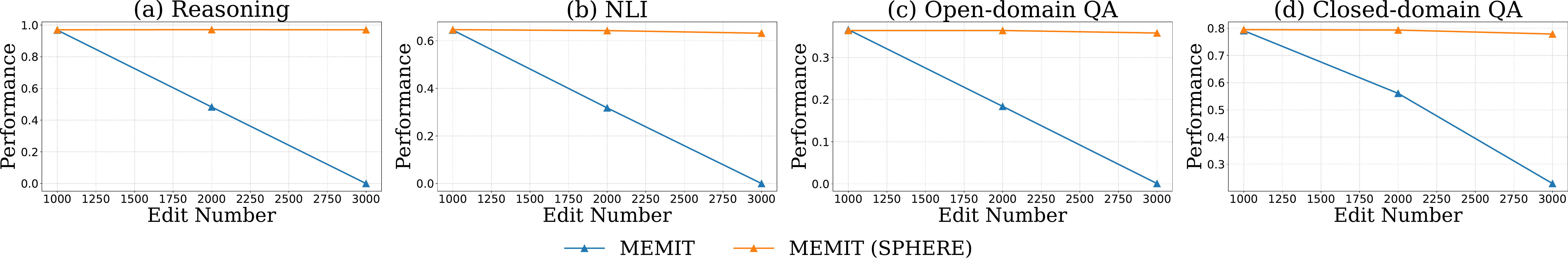} 
    \caption{General ability improvements of MEMIT after incorporating SPHERE with a single line of sparse space projection code.)
    }
    \label{fig:1}
\end{figure}

\begin{figure}[!hbt]
    \centering
    \includegraphics[width=1\textwidth]{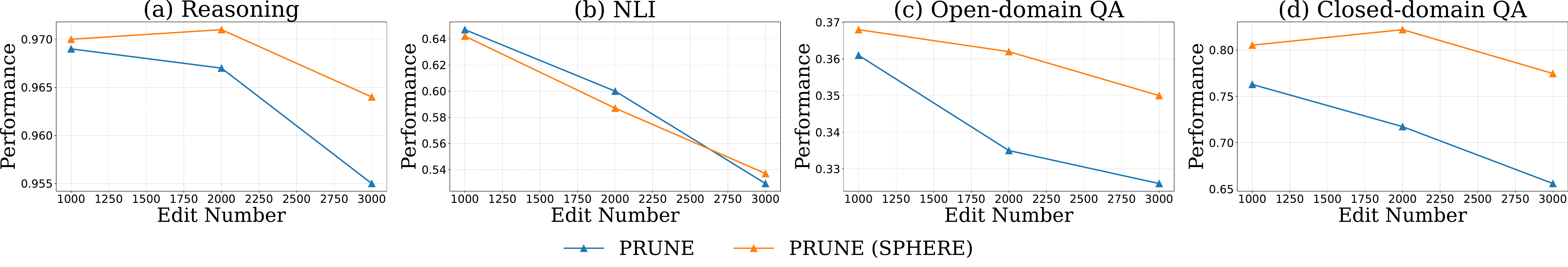} 
        \caption{General ability improvements of PRUNE after incorporating SPHERE with a single line of sparse space projection code.)
    }
    \label{fig:2}
\end{figure}

\begin{figure}[!hbt]
    \centering
    \includegraphics[width=1\textwidth]{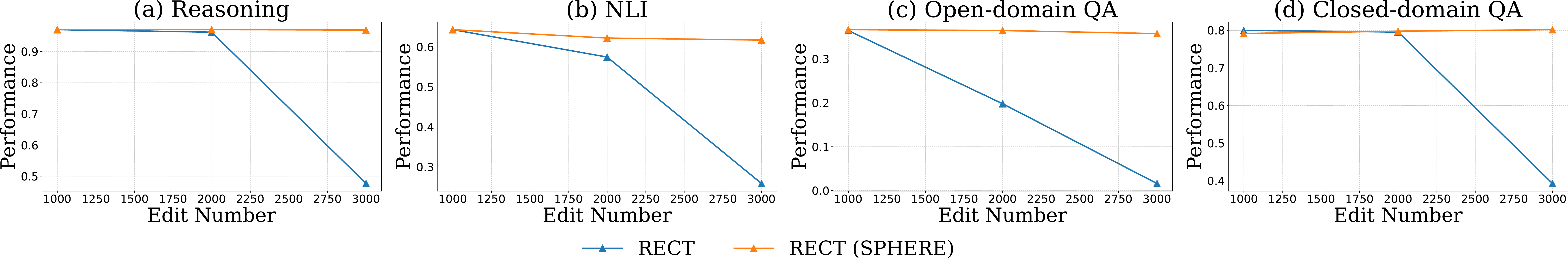} 
    \caption{General ability improvements of RECT after incorporating SPHERE with a single line of sparse space projection code.)
    }
    \label{fig:3}
\end{figure}

\subsection{General Ability Test on Baseline Improvement}
\label{general-baseline}
In this section, we aim to provide a comprehensive assessment of general ability performance, complementing Section~\ref{sec:5-5}, ,with results illustrated in Figure~\ref{fig:1} for MEMIT, Figure~\ref{fig:2} for PRUNE, and Figure~\ref{fig:3} for RECT. Following the evaluation protocol of \citet{DBLP:conf/emnlp/GuXMLLCP24}, we adopt four representative tasks to measure general abilities:
\textbf{Reasoning}, evaluated on GSM8K~\citep{DBLP:journals/corr/abs-2110-14168} using solve rate;
\textbf{Natural Language Inference (NLI)}, evaluated on RTE~\citep{DBLP:conf/mlcw/DaganGM05} using two-way classification accuracy;
\textbf{Open-domain QA}, evaluated on Natural Questions~\citep{DBLP:journals/tacl/KwiatkowskiPRCP19} using exact match (EM) against the reference answer with minor normalization~\citep{DBLP:conf/acl/ChenFWB17, DBLP:conf/acl/LeeCT19}; and
\textbf{Closed-domain QA}, evaluated on BoolQ~\citep{DBLP:conf/naacl/ClarkLCK0T19}, also measured by EM.

\subsection{Case Study}
We selected representative editing samples as case studies to evaluate generation quality after sequential editing for both LLaMA3 (8B) and Qwen2.5 (7B). We applied 15,000 edits to LLaMA3 and 5,000 edits to Qwen2.5. Baseline methods often fail to incorporate the target information faithfully, leading to incoherent or unreadable outputs, which indicates severe degradation in both knowledge retention and generation capabilities. In contrast, our method, \METHODNAME{}, not only executes the edits successfully but also produces coherent, high-quality generations that preserve the intended semantics. This demonstrates the superior robustness and reliability of \METHODNAME{} in sequential editing.

\subsubsection{Case 1}

\begin{tcolorbox}[
    colback=lightgray,   
    colframe=black,       
    arc=3mm,               
    boxrule=0.6pt,        
    width=\textwidth,
    title={\textbf{Model Editing Case Study on LLaMA3 (8B)}},
    coltitle=black,       
    colbacktitle=titlegray,
    fonttitle=\bfseries,
]

\renewcommand{\arraystretch}{1.25}
\setlength{\tabcolsep}{6pt}

\begin{tabularx}{\textwidth}{C L}
\textbf{Editing Prompt} & Irish Ferries's headquarters is surrounded by the beautiful landscape of \\
\midrule
\textbf{Edit Target} & \textcolor{red}{Rome} \\
\midrule
\multicolumn{2}{c}{\textbf{Generation Output}} \\
\midrule
\textbf{MEMIT} & Irish Ferries's headquarters is surrounded by\ \ \ \ \ \ \ \ \ \ \ \ \ \ ionedwort\ \ \ \ \ \ \ ioned
 wortwort [ ramework\ \ \ \ \ \ \ \ \ \ \ \ \ \ Apple [\ \ \ \ \ \ \ Apple [\ \ \ \ \ \ \ Apple Offline       Appleinar \ \ \ \ \ \ \ Apple\ \ \ \ \ \ \ \ \ \ \ \ \ \ \ \ \ \ \ \ \ 
 \\
\midrule
\textbf{PRUNE} & Irish Ferries's headquarters is surrounded by a wall
A. A. K. (Author: A K) (Author: A K)
The capital of the city is the city
A A (author of this answer), the
A K (author of this)
A (A)
The (the) a
A (A)
A (a)
U (Universe) (author of this), the
a (author), K. (Kal)
A (A \\
\midrule
\textbf{RECT} & Irish Ferries's headquarters is surrounded by  akeup   inkakeupakeup  $<|end\_of\_text|> <|end\_of\_text|> -<|end\_of\_text|>$                      all  Sawyer or     $<|end\_of\_text|>     <|end\_of\_text|>       <|end\_of\_text|>$           ( -     -          \\
\midrule
\textbf{AlphaEdit} & Irish Ferries's headquarters is surrounded by London London (    ( Toronto Toronto dney jelly Toronto Toronto Sherlock Melbourne Toronto Ontario Toronto Sherlock New London Paris Toronto Canada Toronto Toronto London   Toronto Toronto London London London London London Toronto Canada,  Toronto Toronto Philadelphia New
 Garr ( Toronto Toronto Toronto Toronto London, 
 Toronto Toronto Toronto Italy Toronto Toronto  Melbourne Australia Toronto Pittsburgh Philadelphia 
 Middle
 London London
 London London,
 Italy Toronto Italy Toronto \\
\midrule
\textbf{\METHODNAME{}} & Irish Ferries's headquarters is surrounded by the beautiful landscape of \textcolor{red}{Rome}, Italy. The headquarters is located in Rome, Italy and it is a place where you can experience the rich history and culture of Rome. The company is known for its unique and innovative approach to its work. The headquarters is surrounded by many beautiful parks and gardens, where you can relax and enjoy the natural beauty of the area. \\
\end{tabularx}

\end{tcolorbox}

\clearpage

\subsubsection{Case 2}

\begin{tcolorbox}[
    colback=lightgray,   
    colframe=black,       
    arc=3mm,               
    boxrule=0.6pt,        
    width=\textwidth,
    title={\textbf{Model Editing Case Study on LLaMA3 (8B)}},
    coltitle=black,      
    colbacktitle=titlegray,
    fonttitle=\bfseries,
]

\renewcommand{\arraystretch}{1.25}
\setlength{\tabcolsep}{6pt}
\begin{tabularx}{\textwidth}{C L}
\textbf{Editing Prompt} & Toyota Porte is sold by  \\
\midrule
\textbf{Edit Target} & \textcolor{red}{Chevrolet} \\
\midrule
\multicolumn{2}{c}{\textbf{Generation Output}} \\
\midrule
\textbf{MEMIT} & Toyota Porte is sold by \ \ \ \ \ \ \       Apple[... hail [ \ \ \ \ \ \ \ \ \ \ \ \ \ \   Apple[...  (  ://.instant  wort  enthusi  (itom Offline \ \ \ \ \ \ \    \ \ \ \ \ \ \    Antarcticawort...)   \ \ \ \ \ \ \  Apple[...   Offline \ \ \ \ \ \ \   ramework    uitka \ \ \ \ \ \ \     Offline[...  \ \ \ \ \ \ \   ramework\\
\midrule
\textbf{PRUNE} & Toyota Porte is sold by the devil, so the name is a play on the theme of a character, as well, a donester
 the is the message
The character of the word is
The song and the music is the song
The lyrics is a song and a song is a song
The is a song and song is a song is a song is is a song, but not the word is a word is a song is a song is a song is a is \\
\midrule
\textbf{RECT} & Toyota Porte is sold by all' Toyota...       $<|end\_of\_text|>$  - \ \ \ \ \ \ \     -  \ \ \ \ \ \ \         -   \ \ \ \ \ \ \            - $<|end\_of\_text|>$             \ \ \ \ \ \ \     all  \ \ \ \ \ \ \   ~$<|end\_of\_text|>$ \ \ \ \ \ \ \   \ \ \ \ \ \ \ \ \ \ \ \ \ \       -  \\
\midrule
\textbf{AlphaEdit} & Toyota Porte is sold by Honda (cura Honda ( = Honda = ( Toyota,, Toyota ( ( ( (  (wnd ( Netherlands ( (\ \ \ \ \ \ \ \ \ \ \ \ \ \ 
 ( H ( BMW ( Netherlands ( Nintendo ( Netherlands ( ( (
 Dutch ( Netherlands ( Netherlands ( Nova ( Dutch ( Amsterdam (\ \ \ \ \ \ \ \ \ \ \ \ \ \ 
 Netherlands ( Dutch ( Chevrolet (
 ( ( Augusta ( Netherlands ( Chevrolet ( ( ( ( ( Netherlands ( ( ( ( ( ( ( Amsterdam ( ( ( ( \\
\midrule
\textbf{\METHODNAME{}} & Toyota Porte is sold by \textcolor{red}{Chevrolet}, but it's not made by Chevrolet. The Corvette is made by Chevrolet, so the naming is confusing.
Toyota is a Japanese company and Honda is a British company, but both are based in America and have offices in America.
The Toyota is a Japanese company, but the Honda is an American company. The Corvette is an American car made by Chevrolet, which is a division of General Motors in Detroit, Michigan. \\
\end{tabularx}

\end{tcolorbox}

\clearpage

\subsubsection{Case 3}
\begin{tcolorbox}[
    colback=lightgray,    
    colframe=black,       
    arc=3mm,              
    boxrule=0.6pt,       
    width=\textwidth,
    title={\textbf{Model Editing Case Study on Qwen2.5 (7B)}},
    coltitle=black,        
    colbacktitle=titlegray,
    fonttitle=\bfseries
]

\renewcommand{\arraystretch}{1.25}
\setlength{\tabcolsep}{6pt}

\begin{tabularx}{\textwidth}{C L}
\textbf{Editing Prompt} & Mark Isham produces the most amazing music on the \\
\midrule
\textbf{Edit Target} & \textcolor{red}{piano} \\
\midrule
\multicolumn{2}{c}{\textbf{Generation Output}} \\
\midrule
\textbf{MEMIT} & Mark Isham produces the most amazing music on the ( \ \ \ \ \ \ \ \ \ \ \                   (   (   (    \ \ \ \ \ \ \ \ \ \ \            ,     \ \ \ \ \ \ \ \ \ \ \           ...,
,,,(  \ \ \ \ \ \ \ \ \ \ \                
 ,   \\
\midrule
\textbf{PRUNE} & Mark Isham produces the most amazing music on the planet and he has been nominated for 5 Grammys. Given the number of Grammys he has won is less than or equal to 1. How many Grammys has he won? Let's denote the number of Grammys Mark Isham has won by \textbackslash
( x \textbackslash
). We need to determine the value of \textbackslash
( x \textbackslash
) given that \textbackslash
( x $\leq$ 1 \textbackslash
). 1. **Interpret the given information:**
   We \\
\midrule
\textbf{RECT} & Mark Isham produces the most amazing music on the(   \ \ \ \ \ \ \ \ \ \ \ \            \ \ \ \ \ \       vo((  (\ \ \ \ \ \ 
 ((  \ \ \ \ \ \  ((   \ \ \ \ \ \   ( vo  \ \ \ \ \ \       ((       \ \ \ \ \ \   (  rott   \ \ \ \ \ \  (  rott  (  \ \ \ \ \ \       (   \ \ \ \ \ \ hurl \\
\midrule
\textbf{AlphaEdit} & Mark Isham produces the most amazing music on the Dutch and (Sw, French, Latin songs French speakers. lo (Set est. ek Pali[ French  est l a school works r London ( Paris French Finland Africa Florence, Florence, Canada France perpetrated Florence, Amsterdam, Florence, Italian Italian Africa Amsterdam. Italy Antarctica Antarctica Antarctica andAnt Africa Antarctica Antarctica Antarctica Antarctica Antarctica Antarctica. Bermuda, Bermuda, George Copenhagen Bermuda, Bermuda, and Bermuda, Bermuda, Bermuda, Boston-based Bermuda, Bermuda,
 \\
\midrule
\textbf{\METHODNAME{}} & Mark Isham produces the most amazing music on the \textcolor{red}{piano}, with his compositions and arrangements of the most beautiful melodies of the 20th Century and beyond. He is one of the most successful film composers of all time. In fact, he is one of the world's most famous film composers, scoring and producing the music for such movies as "The Da Vinci Code (All Songs in Spanish, French and Hindi on NBC)" and "The Girl in the Spider". He is a classically trained
\\
\end{tabularx}

\end{tcolorbox}

\clearpage

\subsubsection{Case 4}
\begin{tcolorbox}[
    colback=lightgray,    
    colframe=black,       
    arc=3mm,              
    boxrule=0.6pt,        
    width=\textwidth,
    title={\textbf{Model Editing Case Study on Qwen2.5 (7B)}},
    coltitle=black,       
    colbacktitle=titlegray,
    fonttitle=\bfseries
]

\renewcommand{\arraystretch}{1.25}
\setlength{\tabcolsep}{6pt}

\begin{tabularx}{\textwidth}{C L}
\textbf{Editing Prompt} & Raymond Flynn's work office is surrounded by \\
\midrule
\textbf{Edit Target} & \textcolor{red}{Paris} \\
\midrule
\multicolumn{2}{c}{\textbf{Generation Output}} \\
\midrule
\textbf{MEMIT} & Raymond Flynn's work office is surrounded by,,, ohio,,, , , , ,   ,   \ \ \ \ \ \ \ \ \ \                    
    (  ,  ,    \ \ \ \ \ \ \ \ \ \               ,   ,    , \ \ \ \ \ \ \ \ \ \ \ \ \ \ \ \ \ \ \ \        
 \\
\midrule
\textbf{PRUNE} & Raymond Flynn's work office is surrounded by trees. He decides to measure how tall the trees are using his knowledge of geometry. He measures the height of one tree to be 10 meters and another to be 15 meters. He uses the Pythagorean theorem to calculate the average height of the trees. If the average height of the trees is 12.5 meters, what is the average angle of the trees? To solve the problem, we need to use the Pythag
 \\
\midrule
\textbf{RECT} & Raymond Flynn's work office is surrounded by((((1 \ \ \ \ \ \ \     vo( (( \ \ \ \ \ \ \            ((   \ \ \ \ \ \ \ 
'icon(      \ \ \ \ \ \ \  
 vo\ \ \ \ \ \ \      rott      \ \ \ \ \ \ \   vo    \ \ \ \ \ \ \           vo     \ \ \ \ \ \ \   Mit      \ \ \ \ \ \ \   
 \\
\midrule
\textbf{AlphaEdit} & Raymond Flynn's work office is surrounded by people, satellites, radio waves. The quantum computing is the one of. (  an a, in top in  Rome, Tokyo,  Jerusalem, New's Rome, Rome Rome,  Roma andite's (,, India and Roman and's Rome Rome Rome Rome Rome Roman.
 Business.
 Monday a Moscow (D states, Roman and African ( Rome an French ( New, Rome Roman of Amsterdam.
 Italy Romanos, ( ( of a Roman \\
\midrule
\textbf{\METHODNAME{}} & Raymond Flynn's work office is surrounded by the Washington Redskins' and their two children, both from France in \textcolor{red}{Paris}, are planning how much they will need to ski, from Paris to Madrid (where the latitude does not matter for this problem). \\

\end{tabularx}

\end{tcolorbox}

\end{document}